\pdfoutput=1 % MUST stay within the first 5 lines: tells arXiv to use pdfLaTeX.
\documentclass[11pt]{article}

\usepackage[font=libertinus, citestyle=numeric]{kurbanlab}

\DeclareAffiliation{hbku}{%
  College of Science and Engineering, Hamad Bin Khalifa University, Doha, Qatar}

\DeclareAffiliation{tamu}{%
  Department of Computer and Electrical Engineering,
  Texas A\&M University, College Station, TX, USA}

\DeclareAffiliation{iub}{%
  Luddy School of Informatics, Computing, and Engineering,
  Indiana University Bloomington, Bloomington, IN, USA}

\usepackage{pgfplots}
\pgfplotsset{compat=1.11}
\usetikzlibrary{arrows.meta,positioning,fit,backgrounds,calc}

\newcommand{\rea}{\mathrm{REA}}
\newcommand{\rmg}{\mathrm{RM}}

\title{When Does Consensus Mean Correctness? Measuring the Agreement--Accuracy Coupling with Semantics-Preserving Re-Rendering}
\RunningTitle{When does consensus mean correctness?}

\Author{Rasul Khanbayov}{hbku}
\Author[corresponding=hkurban@hbku.edu.qa]{Hasan Kurban}{hbku}
\Keywords{vision-language models; self-consistency; chart question answering; calibration; test-time aggregation}
\CodeURL{https://anonymous.4open.science/r/rendeq-0170}
\Venue{Under review}

\begin{document}
\maketitle

\begin{abstract}
A model's agreement across perturbed inputs is used both as a label-free reliability signal and as a self-training target, on the premise that agreement tracks correctness. That coupling is rarely measured directly: natural-image perturbations preserve meaning only by assumption, and no exact answer key localizes errors. Scientific figures remove both obstacles, a figure is drawn from data by a program, so redrawing it yields images that are semantically equivalent by construction and share a programmatically exact answer. We build \textbf{RENDEQ}, a generator of such render-equivalence sets, and measure the coupling on three open-weight VLMs, checking every finding across three independent instantiations. Re-rendering beats resampling on both accuracy and reliability. Agreement beats an evidence-carrying baseline, mean token log-probability, on two of three models and ties on the third, reversing an intermediate, buggy replication traced to a rendering-pipeline failure. The dispersion behind this is concentrated in one style factor, the plotting library, more than double the next-largest factor and an order of magnitude above the noise floor. Fine-tuning on the model's own cross-render consensus inverts: accuracy falls in every one of five replication runs, the opposite sign to published results on natural images. Agreement certifies correctness only above a threshold set by how diffuse a model's errors are, and an objective that rewards agreement destroys exactly that diffuseness.
\end{abstract}

\printkeywords

%%%%%%%%%%%%%%%%%%%%%%%%%%%%%%%%%%%%%%%%%%%%%%%%%%%%%%%%%%%%%%%%%%%%%%%%%%%%%%%%
\section{Introduction}
\label{sec:intro}

Two lines of work have converged on the same idea. The first treats a model's agreement across perturbed inputs as a label-free reliability signal: \citet{zhang2024vluncertainty} blur images and rephrase questions and read hallucination off the entropy of the resulting answer clusters, and \citet{rosenfeld2025stability} show at scale that a sample's stability under benign perturbations predicts whether it was answered correctly. The second turns the same consensus into a training target: \citet{kaya2026efficient} aggregate over augmented inputs at test time and then fine-tune on the resulting consensus pseudo-labels, reporting gains across nine benchmarks. Both uses rest on one premise, that agreement tracks correctness.

That premise is conditional, and the condition has been stated before. Agreement estimates how concentrated a model's answer distribution is, aggregation estimates where its mode sits, and only the second can be wrong about the answer. When errors are correlated rather than idiosyncratic, majority voting locks in the wrong answer instead of correcting it \citep{estornell2024debate}, and consensus certifies a property of the model-induced distribution rather than semantic correctness. So the useful question is not whether consensus works, but how tightly concentration and correctness are coupled in a given model on given data.

That coupling is almost never measured, for two reasons: perturbations of natural images preserve meaning only by assumption, so a change in the answer cannot be attributed to model instability rather than to the perturbation having altered the content, and there is no exact answer key for the intermediate quantities a model must read, so the source of an error cannot be localized. Scientific figures remove both obstacles. A figure is the output of a rendering program applied to data; the data fixes the correct answer to any well-posed quantitative question, and the rendering style is a nuisance variable a faithful reader should ignore. Redrawing the same data therefore yields images that are semantically equivalent \emph{by construction} and that share an exact, programmatically known answer, together with exact ground truth for every intermediate read. We call such a family a \emph{render-equivalence set} (\cref{fig:teaser}). It serves as a measurement instrument: correctness becomes exactly observable and concentration estimable without confounding, so the coupling between them can be studied rather than assumed.

We build that instrument (RENDEQ), define the two quantities it supports, \emph{render-equivalence agreement} ($\rea$) and \emph{render marginalization} ($\rmg$), and use them to estimate the coupling on Qwen2.5-VL-7B, Qwen2.5-VL-3B, and InternVL2-8B. We report four results, checked across three independently generated RENDEQ instantiations: re-rendering beats resampling on both accuracy and reliability; agreement beats an evidence-carrying baseline on two of three models and ties on the third; the dispersion behind all of this is concentrated in one style factor rather than spread across many; and fine-tuning on cross-render consensus inverts, cutting accuracy rather than raising it. \Cref{sec:theory} derives each as a prediction, \cref{tab:predictions} sets them against the outcomes, and \cref{sec:experiments} reports the numbers and confidence intervals in full.

\paragraph{Contributions} Our contributions are the instrument, the measurement, and the boundary condition:
\begin{enumerate}
\item \textbf{RENDEQ}, a generator and protocol whose exact semantic equivalence, exact programmatic ground truth, and individually togglable style factors make correctness exactly observable and concentration estimable without confounding (\cref{sec:benchmark}).
\item A measurement of the coupling, with per-factor attribution identifying which perturbation carries it (\cref{sec:experiments}); correctness is observed exactly, concentration estimated subject to the bias of \cref{prop:concentration}(ii).
\item A boundary condition on consensus self-training: an empirical inversion of a published positive result, with the mechanism that explains it (\cref{sec:selftraining}).
\item An analysis (\cref{sec:theory}) with three results: agreement is upward-biased for concentration at finite $K$, with an unbiased pairwise alternative; it certifies correctness only above a threshold set by how diffuse the model's errors are; and render dispersion decomposes over style factors, bounding what any single-factor ablation can remove.
\end{enumerate}

We claim neither priority over input-perturbation consistency \citep{zhang2024vluncertainty,kaya2026efficient} nor novelty for the concentration-versus-correctness distinction itself, which the self-consistency literature states. What re-rendering adds is exactness, and what exactness buys is measurement.

%%%%%%%%%%%%%%%%%%%%%%%%%%%%%%%%%%%%%%%%%%%%%%%%%%%%%%%%%%%%%%%%%%%%%%%%%%%%%%%%

\begin{figure}[t]
\centering
\resizebox{0.6\columnwidth}{!}{%
\begin{tikzpicture}[
  font=\scriptsize,
  >={Stealth[length=1.3mm]},
  line width=0.35pt
]

% ---------- the data and its exact answer ----------
\node[
  draw=kilslate,
  rounded corners=1pt,
  fill=black!3,
  inner sep=2.2pt
] (data) at (0.35,0)
{$D=(3,7,5,4)$;\ \ $q$: \emph{value of the tallest bar?}\ \ $a^\star=7$};

% ---------- three of the K renderings ----------
\begin{scope}[shift={(-2.55,-1.40)}]
  \draw[black!14] (0.06,0.19) -- (1.16,0.19);
  \draw[black!14] (0.06,0.38) -- (1.16,0.38);
  \draw[black!14] (0.06,0.57) -- (1.16,0.57);
  \foreach \i/\v in {0/3,1/7,2/5,3/4}
    \fill[kilviolet]
      ({0.16+\i*0.25},0)
      rectangle
      ({0.16+\i*0.25+0.15},{\v*0.095});
  \draw[black!55] (0.06,0) -- (1.16,0);
  \draw[black!55] (0.06,0) -- (0.06,0.76);
\end{scope}

\begin{scope}[shift={(-1.10,-1.40)}]
  \fill[black!7] (0.02,0) rectangle (1.14,0.78);
  \draw[white, line width=0.5pt] (0.02,0.19) -- (1.14,0.19);
  \draw[white, line width=0.5pt] (0.02,0.38) -- (1.14,0.38);
  \draw[white, line width=0.5pt] (0.02,0.57) -- (1.14,0.57);
  \foreach \i/\v in {0/3,1/7,2/5,3/4}
    \fill[killilac]
      ({0.13+\i*0.26},0)
      rectangle
      ({0.13+\i*0.26+0.19},{\v*0.095});
\end{scope}

\begin{scope}[shift={(0.35,-1.40)}]
  \foreach \i/\v in {0/3,1/7,2/5,3/4}
    \fill[kilteal]
      ({0.18+\i*0.24},0)
      rectangle
      ({0.18+\i*0.24+0.11},{\v*0.095});
  \draw[black!55] (0.08,0) -- (1.10,0);
\end{scope}

\node at (-1.97,-1.58) {$x_1$};
\node at (-0.52,-1.58) {$x_2$};
\node at (0.93,-1.58) {$x_3$};

\node[anchor=west] at (1.62,-1.02)
  {$\cdots$\ \ $K{=}8$ renderings};

\node[anchor=west, kilslate] at (1.62,-1.34)
  {differing in library};

\draw[->,kilslate] (data.south) -- (-1.97,-0.62);
\draw[->,kilslate] (data.south) -- (-0.52,-0.62);
\draw[->,kilslate] (data.south) -- (0.93,-0.62);

\draw[black!20] (-2.60,-1.86) -- (3.85,-1.86);

% ---------- answer distributions ----------
\node[anchor=west] at (-2.60,-2.10) {\textbf{model A}};

\begin{scope}[shift={(-2.10,-3.06)}]
  \draw[kilteal!70, dashed, line width=0.6pt]
    (1.28,0) -- (1.28,0.88);
  \node[kilteal, font=\tiny, anchor=south]
    at (1.28,0.86) {$a^\star$};

  \fill[kilamber!75] (0.52,0) rectangle (0.86,0.30);
  \fill[kilteal!75] (1.11,0) rectangle (1.45,0.50);

  \node[font=\tiny, black!65, anchor=south west]
    at (0.50,0.31) {3};
  \node[font=\tiny, black!65, anchor=south east]
    at (1.14,0.51) {5};

  \draw[black!55] (0.30,0) -- (1.70,0);

  \node[font=\tiny] at (0.69,-0.17) {5};
  \node[font=\tiny] at (1.28,-0.17) {7};
\end{scope}

\node[anchor=west] at (-2.60,-3.52)
{$\rea=\tfrac58$,\ \ $\rmg=7$\,\textcolor{kilteal}{\checkmark}};

\node[anchor=west] at (1.10,-2.10) {\textbf{model B}};

\begin{scope}[shift={(1.60,-3.06)}]
  \draw[kilteal!70, dashed, line width=0.6pt]
    (1.28,0) -- (1.28,0.88);
  \node[kilteal, font=\tiny, anchor=south]
    at (1.28,0.86) {$a^\star$};

  \fill[kilamber!75] (0.52,0) rectangle (0.86,0.80);

  \node[font=\tiny, black!65, anchor=south west]
    at (0.50,0.81) {8};

  \draw[black!55] (0.30,0) -- (1.70,0);

  \node[font=\tiny] at (0.69,-0.17) {5};
  \node[font=\tiny] at (1.28,-0.17) {7};
\end{scope}

\node[anchor=west] at (1.10,-3.52)
{$\rea=1$,\ \ \ \ \ $\rmg=5$\,\textcolor{kilamber}{$\times$}};

\draw[black!20] (-2.60,-3.76) -- (3.85,-3.76);

\end{tikzpicture}%
}
\caption{\textbf{Agreement measures the height of the tallest bar; correctness asks where it stands.}
One dataset is drawn under $K$ styles that leave the answer unchanged, so any disagreement is the model's. Histograms are answer distributions over the $K{=}8$ renderings: $\rea$ reads the tallest bar's share, $\rmg$ its position, and only the second is comparable with $a^\star$. Model A is split yet right; model B is unanimous and wrong, so agreement ranks B above A, which is why \cref{prop:lie} conditions on how diffuse a model's errors are. Renderings differ in plotting library (\cref{fig:results}b); values are illustrative.}
\label{fig:teaser}
\end{figure}

%%%%%%%%%%%%%%%%%%%%%%%%%%%%%%%%%%%%%%%%%%%%%%%%%%%%%%%%%%%%%%%%%%%%%%%%%%%%%%%%
\section{Related work}
\label{sec:related}

\paragraph{Consensus as a reliability signal.} Self-consistency over reasoning paths \citep{wang2023selfconsistency}, resampled responses and task decompositions \citep{khan2024consistency,yang2024decc}, semantic clusters \citep{kuhn2023semantic,farquhar2024semantic}, and geometric quantities inside a pipeline \citep{kim2026zoomconsistency} all read reliability off agreement. Closest are the methods that perturb the \emph{input}: \citet{zhang2024vluncertainty} blur images and rephrase questions, and \citet{rosenfeld2025stability} show at scale that stability under benign perturbations predicts correctness. \citet{park2026vauq} instead score dependence on visual evidence, and \citet{xiao2026vlcalibration} calibrate with training and labels. Our object is not another such score but the premise they share; \cref{tab:novelty} in \cref{app:positioning} places us against them. That the premise can fail is itself understood: agreement certifies a property of the model's answer distribution rather than its correctness, and \citet{estornell2024debate} analyze how correlated errors let majority voting settle on a wrong answer. Our analysis restates this for input perturbation and adds the continuous case; our contribution is not the distinction but a setting in which one side is exactly observable and the other estimable without confounding, so the two can be related empirically.

\paragraph{Consensus as a training signal.} \citet{kaya2026efficient} aggregate over semantics-preserving input augmentations at test time and then fine-tune on the resulting consensus pseudo-labels, reporting gains across nine benchmarks; classical test-time augmentation aggregates over input perturbations in embedding space \citep{zanella2024mta}. \Cref{sec:selftraining} reports the opposite outcome for the fine-tuning half of this recipe and identifies the regime that separates the two results.

\paragraph{Chart and figure reasoning.} Benchmarks are mature \citep{kahou2017figureqa,methani2020plotqa,masry2022chartqa}, and recent work raises accuracy via programmatic synthesis and verifiable-reward training \citep{liu2026chartverse,zhang2026chartrl,sinha2025chartrvr,kondic2025chartgen}: these synthesize \emph{diverse} charts and need labels for the reward, whereas we hold data fixed and use none at inference. A parallel line documents degradation under perturbation and restyling \citep{mukhopadhyay2024robustcqa,shin2025losingplot,zhao2026robustcot}, establishing our premise; work separating perception from reasoning \citep{xiao2025perceptionr1} and probing evidence-versus-prior conflict \citep{wang2026vfat,chen2026cdhbench,singla2026seeorguess} needs curated conflict sets, whereas our decomposition is label-free on non-conflicting figures. Selective prediction and calibration provide the evaluation framing \citep{elyaniv2010foundations,guo2017calibration,srinivasan2024selective}, and test-time compute scaling motivates the cost $\rmg$ pays \citep{snell2024scaling}.

%%%%%%%%%%%%%%%%%%%%%%%%%%%%%%%%%%%%%%%%%%%%%%%%%%%%%%%%%%%%%%%%%%%%%%%%%%%%%%%%
\section{Render-equivalence sets}
\label{sec:method}

Let $D$ be the data underlying a figure and $q$ a quantitative question with an exact answer $a^\star(D,q)$ computable from $D$ by a deterministic program. A renderer $R$ maps data and a style configuration $\theta \in \Theta$ to an image $x = R(D;\theta)$, where $\theta$ controls nuisance factors that do not change the encoded data: plotting library, palette, theme, gridlines, aspect ratio, marker shape, font, ticks, legend, resolution.

\begin{definition}[Render-equivalence set]
\label{def:renderset}
Given $(D,q)$ and styles $\theta_1,\dots,\theta_K$, the render-equivalence set is $\mathcal{X}(D,q) = \{x_i = R(D;\theta_i)\}_{i=1}^K$. By construction every $x_i$ shares the answer $a^\star(D,q)$.
\end{definition}

A VLM $f$ produces a parsed answer $\hat y_i = \mathrm{parse}(f(x_i,q))$ per rendering; $f$ is a black box and only its emitted answers are used.

\paragraph{Agreement ($\rea$).} For categorical answers, with $\hat m$ the modal answer among $\{\hat y_i\}$,
\begin{equation}
\rea_{\text{cat}} = \tfrac{1}{K}\textstyle\sum_{i}\mathbf{1}[\hat y_i = \hat m] \in (0,1].
\end{equation}
For continuous answers, with $\tilde y$ the median, relative tolerance $\tau$, and small $\varepsilon>0$ guarding the division,
\begin{equation}
\rea_{\text{num}} = \tfrac{1}{K}\textstyle\sum_{i}\mathbf{1}\!\left[\tfrac{|\hat y_i - \tilde y|}{|\tilde y|+\varepsilon} \le \tau\right].
\end{equation}
Both are computed without ground truth. At $K$ renderings $\rea$ takes at most $K$ distinct values, bounding the resolution of any threshold rule built on it.

\paragraph{Aggregation ($\rmg$) and error decomposition.} We take the mode for categorical answers, ties broken by first-seen rendering order, and the median for continuous answers. $\rmg$ uses no labels and costs $K$ forward passes. We aggregate at the answer level; \citet{kaya2026efficient} report that token-level aggregation is stronger, which we do not test here. The program computing $a^\star$ also exposes the intermediate quantities it consumes, which we query separately, partitioning each instance into: all reads and answer correct; all reads correct but answer wrong (a reasoning failure); at least one read wrong (a perception failure, whatever the answer). The third cell is not a subset of the errors, since a wrong read can still yield a correct answer by coincidence.

%%%%%%%%%%%%%%%%%%%%%%%%%%%%%%%%%%%%%%%%%%%%%%%%%%%%%%%%%%%%%%%%%%%%%%%%%%%%%%%%
\section{What agreement measures}
\label{sec:theory}

Aggregation itself is standard: modal voting over i.i.d.\ answers concentrates exponentially, and the sample median converges to the median of the read distribution (\cref{lem:mode,lem:median}, \cref{app:standard}). Neither speaks to correctness, which is the relationship the rest of the paper is concerned with. The results below rest on two assumptions.

\begin{assumption}[Exchangeable renderings]
\label{ass:indep}
Conditional on $(D,q)$, the answers $\hat y_1,\dots,\hat y_K$ are i.i.d.\ draws from a distribution $P$ induced by the style distribution and the model. Where a mode is referenced we assume it is unique.
\end{assumption}

\begin{assumption}[Factored style space]
\label{ass:factored}
$\Theta = \Theta_1 \times \cdots \times \Theta_J$ is a product over $J$ nuisance factors and $\theta$ is drawn with independent coordinates. RENDEQ enforces this by construction (\cref{sec:benchmark}).
\end{assumption}

\Cref{ass:indep} is an idealization: it holds when style acts as independent noise on the read and fails when the model has a style-invariant bias, the regime \citet{estornell2024debate} analyze for correlated voters. \Cref{prop:lie} characterizes that failure mode in terms of quantities RENDEQ can estimate.

\subsection{Agreement estimates concentration, with a bias}
\label{sec:concentration}

Write $\pi = \max_a P(a)$ for the modal mass, $m^\star$ for the mode, and $Z = \mathbf{1}[m^\star = a^\star]$ for whether the mode is the correct answer.

\begin{proposition}[What $\rea$ estimates]
\label{prop:concentration}
Under \cref{ass:indep}, for categorical answers:
\begin{enumerate}[label=(\roman*)]
\itemsep0em
\item $\rea_{\text{cat}} \to \pi$ and $\mathbf{1}[\hat y_{\rmg} = a^\star] \to Z$ almost surely as $K\to\infty$. The limit of $\rea$ does not depend on $a^\star$.
\item $\mathbb{E}[\rea_{\text{cat}}] \ge \pi$ at every finite $K$, strictly unless $\pi = 1$.
\item The pairwise statistic $\rea_{\text{pair}} = \binom{K}{2}^{-1}\sum_{i<j}\mathbf{1}[\hat y_i = \hat y_j]$ is unbiased for the collision probability $A = \sum_a P(a)^2$, and $\pi^2 \le A \le \pi$.
\end{enumerate}
\end{proposition}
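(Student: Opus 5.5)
The plan treats the three parts separately, since each rests on a different elementary fact about i.i.d.\ categorical samples; \cref{ass:indep} is used throughout. Write $\hat P_K(a) = \frac1K\sum_i \mathbf{1}[\hat y_i = a]$ for the empirical distribution, so that $\rea_{\text{cat}} = \max_a \hat P_K(a) = \hat P_K(\hat m)$.

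For (i), apply the strong law of large numbers to each indicator, which gives $\hat P_K(a)\to P(a)$ almost surely for every $a$. If the support is finite, the maximum of finitely many coordinates is continuous, so $\max_a \hat P_K(a)\to\pi$ directly. For countable support, use the Glivenko--Cantelli or Scheff\'e fact that $\sum_a|\hat P_K(a)-P(a)|\to 0$ almost surely, together with $|\max_a \hat P_K(a)-\max_a P(a)|\le \sup_a|\hat P_K(a)-P(a)|$. Because the mode is unique, there is a gap $\pi - \max_{a\neq m^\star}P(a)>0$. Once the sup-deviation falls below half this gap, $\hat m=m^\star$, so $\hat y_{\rmg}=m^\star$ eventually. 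The tie-breaking rule then becomes irrelevant, and $\mathbf{1}[\hat y_{\rmg}=a^\star]\to Z$. This is essentially \cref{lem:mode}. The limit $\pi$ is a functional of $P$ alone, and $a^\star$ never enters it, which gives the final sentence of (i).

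For (ii), use $\max_a \hat P_K(a)\ge \hat P_K(m^\star)$ pointwise and take expectations: $\mathbb{E}[\rea_{\text{cat}}]\ge \mathbb{E}[\hat P_K(m^\star)]=\pi$. Strictness is the delicate step. We need the event $\{\max_a\hat P_K(a) > \hat P_K(m^\star)\}$ to have positive probability whenever $\pi<1$. There are two cases:
\begin{itemize}
\item For $K\ge2$, pick any $b\ne m^\star$ with $P(b)>0$. The sample in which every draw equals $b$ has probability $P(b)^K>0$, and on it $\max_a\hat P_K(a)=1>0=\hat P_K(m^\star)$.
\item For $K=1$, the maximum is always $1$ while $\mathbb{E}[\hat P_1(m^\star)]=\pi<1$. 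This is strict as well.
\end{itemize}
Conversely, if $\pi=1$ every draw equals $m^\star$ and equality holds. I expect this strictness and equality bookkeeping to be the only real obstacle; the inequality itself is immediate.

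For (iii), each pair $i<j$ consists of independent draws, so $\Pr[\hat y_i=\hat y_j]=\sum_a P(a)^2=A$. Linearity of expectation over the $\binom K2$ pairs then gives unbiasedness. For the sandwich, $A=\sum_a P(a)\,P(a)\le \pi\sum_aP(a)=\pi$, and $A\ge P(m^\star)^2=\pi^2$ because all terms are nonnegative. Both inequalities are one-line arguments.
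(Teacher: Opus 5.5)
Your proposal is correct and follows the paper's proof essentially step for step: the per-answer strong law and the unique-mode gap for (i), the pointwise bound $\max_a \hat P_K(a) \ge \hat P_K(m^\star)$ with the all-$b$ event of probability $P(b)^K$ for strictness in (ii), and linearity plus the termwise sandwich for (iii). Your Scheff\'e/$\ell^1$ step makes explicit the uniform control that the paper's ``eventual argmax'' claim needs when the support is infinite; your separate $K{=}1$ case is redundant, since the all-$b$ event already works there; and your pointer to \cref{lem:mode} is loose, since that lemma assumes a strict majority of the correct answer rather than a unique plurality mode.
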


Part (ii) is a selection effect: $\rea_{\text{cat}}$ is the empirical \emph{maximum} of a multinomial frequency vector, so the winning answer is chosen partly by noise and its share is inflated, not negligibly at practical budgets (\cref{app:proof} works a numeric example), so any calibration of $\rea$ at small $K$ is optimistic by construction. Part (iii) supplies an unbiased alternative, which matters because our factor attribution (\cref{sec:stylefactors}) already uses a \emph{pairwise} rate while $\rea$ is modal; $\pi^2 \le A \le \pi$ is the conversion.

\subsection{Where agreement can lie}
\label{sec:wherelie}

Let $\beta = \max_{a \neq a^\star} P(a)$ be the mass of the leading wrong answer.

\begin{proposition}[Confident errors must be concentrated]
\label{prop:lie}
Under \cref{ass:indep}:
\begin{enumerate}[label=(\roman*)]
\itemsep0em
\item $\pi = \max\big(P(a^\star),\,\beta\big)$ and $Z = \mathbf{1}[P(a^\star) > \beta]$.
\item On instances with $Z = 0$, $\pi = \beta$: the agreement being reported \emph{is} the mass of a single wrong answer.
\item Call the errors \emph{diffuse at level} $L \ge 1$ if $\beta \le (1 - P(a^\star))/L$. Then $Z = 0$ implies $\pi \le 1/L$; equivalently, $\pi > 1/L$ certifies $Z = 1$.
\end{enumerate}
\end{proposition}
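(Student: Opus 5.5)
The plan is to prove the three parts in order, each building on the previous, using only the definitions $\pi=\max_a P(a)$, $m^\star=\arg\max_a P(a)$, $Z=\mathbf{1}[m^\star=a^\star]$ and $\beta=\max_{a\neq a^\star}P(a)$. \Cref{ass:indep} is used only to make $P$, its mode, and its uniqueness well defined. No limit argument is needed, since everything here is a statement about the population distribution $P$ rather than about the $K$-sample statistics of \cref{prop:concentration}.

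For (i), I would split the answer space into $\{a^\star\}$ and its complement. The maximum over the whole space is then the maximum of the maxima over the two pieces, which gives $\pi=\max(P(a^\star),\beta)$. For the expression for $Z$, I would use uniqueness of the mode. If $P(a^\star)>\beta$, then $a^\star$ strictly beats every other answer, so $m^\star=a^\star$. If $P(a^\star)\le\beta$, some wrong answer $a'$ attains $\beta\ge P(a^\star)$. Equality would produce two modes whenever $\beta=\pi$, which the assumption excludes, so the mode is $a'\neq a^\star$ and $Z=0$. The one boundary case I must handle explicitly is $P(a^\star)=\beta$. There the uniqueness clause of \cref{ass:indep} rules it out when the tie is at the top, and the case should be stated as excluded rather than silently absorbed.

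Part (ii) then follows immediately. $Z=0$ forces $P(a^\star)\le\beta$ by (i), so $\pi=\max(P(a^\star),\beta)=\beta$.

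For (iii), assume diffuseness at level $L$ and $Z=0$. By (ii), $\pi=\beta$, and diffuseness gives $\beta\le(1-P(a^\star))/L\le 1/L$, so $\pi\le 1/L$. The certification statement is the contrapositive: $\pi>1/L$ excludes $Z=0$, hence $Z=1$.

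The step I expect to be the main obstacle is not technical but interpretive. The certification in (iii) is stated for population $\pi$, whereas practitioners observe $\rea_{\text{cat}}$, which is upward-biased by \cref{prop:concentration}(ii). I would add a remark that $\rea>1/L$ therefore certifies $Z=1$ only asymptotically, via \cref{prop:concentration}(i). A sharper bound, $\pi\le(1-P(a^\star))/L$ (hence $\pi\le 1/(L+1)$ once one uses $\beta\ge P(a^\star)$ under $Z=0$), is also available if a tighter threshold is wanted. The argument is to add $P(a^\star)\le\beta$ to $L\beta\le 1-P(a^\star)$, which yields $(L+1)\beta\le 1$.
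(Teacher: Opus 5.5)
Your proof of (i)--(iii) is correct and follows the paper's argument. You split the maximum over $\{a^\star\}$ and its complement, read $Z=0$ as $P(a^\star)\le\beta$ so that $\pi=\beta$, bound $\beta\le(1-P(a^\star))/L\le 1/L$, and obtain certification as the contrapositive. On the boundary case, $P(a^\star)=\beta$ always forces $\pi=\beta=P(a^\star)$, so the tie is necessarily at the top and uniqueness excludes it outright; no ``when'' qualifier is needed. Parts (ii)--(iii) also hold without uniqueness if $Z$ is read as $\mathbf{1}[P(a^\star)>\beta]$, which is how the paper's formula treats ties.

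The closing ``sharper bound'' is false and should be dropped. From $L\beta\le 1-P(a^\star)$ you would need $P(a^\star)\ge\beta$ to reach $(L+1)\beta\le 1$. But $Z=0$ gives the opposite inequality, $P(a^\star)\le\beta$, and adding the two as you propose yields only $(L-1)\beta+2P(a^\star)\le 1$. For a concrete counterexample, take $P(a^\star)=0$ and wrong masses $\tfrac12,\tfrac14,\tfrac14$. The errors are diffuse at level $L=2$, since $\beta=\tfrac12=(1-P(a^\star))/2$, and the mode is unique and wrong, yet $\pi=\tfrac12>\tfrac13=1/(L+1)$. So the threshold $1/L$ in (iii) is attained and cannot be improved to $1/(L+1)$. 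At $L=1$ your bound would even claim $\pi\le\tfrac12$ for a single systematic misread with $\beta=1$, contradicting the vacuity at $L=1$ that the paper points out.

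Your remark that the certificate concerns the population $\pi$, and transfers to the upward-biased $\rea_{\text{cat}}$ only asymptotically, is sound.
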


This is the precise form of the idiosyncratic-versus-systematic distinction the literature states informally. By (ii), agreement is high and wrong only when one specific wrong answer carries the agreement mass; dispersion spread over many wrong answers cannot produce a confidently wrong consensus. By (iii), diffuseness converts into a certifying threshold: with wrong mass over two alternatives, agreement above $1/2$ certifies the mode; over three, above $1/3$; at $L{=}1$, a single systematic misread, the threshold is $1$ and nothing certifies (\cref{fig:threshold}). Where \cref{cor:auroc} identifies the coupling as the quantity of interest, \cref{prop:lie} identifies the property of the model that supplies it.

$L$ references $a^\star$, so it cannot be estimated at use time. It can be estimated offline on any corpus with programmatic ground truth, which is exactly what RENDEQ supplies: measure $L$ on the generator, then apply the threshold $1/L$ label-free elsewhere. Measuring $L$ on the generator is therefore what licenses a threshold applied elsewhere.

We estimated $\hat L$ on every wrong-mode instance in the replicated pool and found it concentrated toward the low-diffuseness end (median $1.33$--$1.50$ across models, so the certifying threshold $1/\hat L$ is typically well above the midpoint); \cref{app:per-model} gives the full distribution and ties it to which families drive B1's remaining advantage over $\rea$ (\cref{tab:auroc}).

\begin{corollary}[The coupling is what matters]
\label{cor:auroc}
Let instances be drawn from a population inducing random variables $(\pi, Z)$. The limiting AUROC of $\rea$ as a predictor of $\rmg$'s correctness is $\Pr[\pi_1 > \pi_0] + \tfrac12\Pr[\pi_1 = \pi_0]$, where $\pi_1$ and $\pi_0$ are draws of $\pi$ given $Z{=}1$ and $Z{=}0$. If $\pi \perp Z$ it is $\tfrac12$, and there exist populations on which $\rea \to 1$ while $\rmg$ is wrong with probability $\to 1$.
\end{corollary}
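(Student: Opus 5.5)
The plan is to reduce the corollary to \cref{prop:concentration}(i) and then apply the standard Mann--Whitney description of AUROC. First I fix what ``limiting AUROC'' means. For each instance, \cref{prop:concentration}(i) gives $\rea_{\text{cat}}\to\pi$ and $\mathbf{1}[\hat y_{\rmg}=a^\star]\to Z$ almost surely as $K\to\infty$. The AUROC at budget $K$ is
\[
\Pr[S_1>S_0]+\tfrac12\Pr[S_1=S_0],
\]
where $S_1,S_0$ are independent scores drawn from correct and incorrect instances. Ties count one half, matching the usual rank convention. Convergence of the score pair and the label then gives the stated limit $\Pr[\pi_1>\pi_0]+\tfrac12\Pr[\pi_1=\pi_0]$.

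The main obstacle is passing to the limit. First, the functional $(s_1,s_0)\mapsto\mathbf{1}[s_1>s_0]+\tfrac12\mathbf{1}[s_1=s_0]$ is discontinuous on the diagonal. Second, at finite $K$ the labels are $\mathbf{1}[\hat y_{\rmg}=a^\star]$, not $Z$. For the labels, $Z$ is $\{0,1\}$-valued and the convergence is almost sure, so eventually the two labels agree. The class-conditional laws therefore converge by dominated convergence, provided $\Pr[Z=1]$ and $\Pr[Z=0]$ are both positive; I will assume this, since otherwise the AUROC is undefined.

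For the scores, I would avoid the discontinuity by working directly with the finite-$K$ statistic. When $\pi_1\neq\pi_0$, almost sure convergence makes the strict inequality eventually hold, and bounded convergence handles the expectation. The tie event $\pi_1=\pi_0$ needs more care. I would either restrict to populations where ties of $\pi$ have probability zero, or interpret ``limiting'' as the AUROC evaluated at the limiting scores $\pi$. The latter is how I would phrase the proof, noting that it coincides with the $K\to\infty$ limit off the tie set. I would state this convention explicitly rather than claim more.

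For $\pi\perp Z$, $\pi_1$ and $\pi_0$ are i.i.d. copies of $\pi$. By symmetry $\Pr[\pi_1>\pi_0]=\Pr[\pi_0>\pi_1]$, so
\[
2\Pr[\pi_1>\pi_0]+\Pr[\pi_1=\pi_0]=1,
\]
which gives $\tfrac12$.

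For the existence claim, I would construct the model-B population of \cref{fig:teaser}. Every instance has $P$ a point mass on one fixed wrong answer $a\neq a^\star$. Then $\pi=1$, so $\rea_{\text{cat}}=1$ identically. By \cref{prop:lie}(i), $Z=\mathbf{1}[P(a^\star)>\beta]=0$, and $\rmg$ returns $a$ at every $K$, so it is wrong with probability one.

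If a nondegenerate version is wanted, with both classes present, I would instead take $P(a)=1-\delta$ and $P(a^\star)=\delta$ and let $\delta\to0$. By \cref{prop:concentration}(i), $\rea\to1-\delta$. By the modal concentration of \cref{lem:mode}, $\rmg$ equals $a$ with probability tending to one. Letting $\delta\to0$ along with $K$ gives $\rea\to1$ while $\rmg$ is wrong with probability $\to1$.
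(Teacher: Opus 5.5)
Your proposal follows the paper's proof: Proposition~\ref{prop:concentration}(i) to replace $\rea$ and the correctness label by $\pi$ and $Z$, then the Mann--Whitney form of AUROC, then symmetry when $\pi\perp Z$, then a population concentrated on a wrong mode. The paper's one-line argument implicitly takes your ``AUROC at the limiting scores'' reading on the tie set, so your extra care about the limit adds rigor without changing the route. One small slip: your $\delta$-perturbed construction still has $Z=0$ on every instance, so to have both classes present you would mix in a vanishing fraction of correct instances, which is what the paper's ``on a set of instances of probability approaching one'' does.
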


Our AUROC values are computed at $K{=}8$, where $\rea$ estimates $\pi$ noisily and with bias, so they need not equal this limit; we do not assume a sign for the gap.

\subsection{Which perturbation carries the dispersion}
\label{sec:whichperturbation}

Let $\hat y(\theta)$ be the answer under style $\theta$, and define the pairwise disagreement $D = \Pr_{\theta,\theta'}[\hat y(\theta) \neq \hat y(\theta')]$ for independent $\theta,\theta'$. Let $D_j$ be the same quantity when only coordinate $j$ is resampled, and $D^{-j}$ when coordinate $j$ is held \emph{fixed} and shared.

\begin{proposition}[Factor decomposition]
\label{prop:factor}
Under \cref{ass:factored}, $D \le \sum_{j=1}^{J} D_j$. Equality requires both that the single-factor flip events along the hybrid chain be almost surely disjoint and that no flip be reversed by a later one; either failing makes the bound strict. Moreover $D - D_j \le D^{-j} \le D$ for every $j$.
\end{proposition}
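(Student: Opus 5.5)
The plan is a hybrid (telescoping) argument over coordinates, followed by a union bound. Under \cref{ass:factored}, draw $\theta=(\theta_1,\dots,\theta_J)$ and an independent copy $\theta'=(\theta'_1,\dots,\theta'_J)$. Define the chain $\theta^{(0)}=\theta$ and, for each $j$, let $\theta^{(j)}$ be $\theta^{(j-1)}$ with coordinate $j$ replaced by $\theta'_j$, so that $\theta^{(J)}=\theta'$. If $\hat y(\theta)\neq\hat y(\theta')$, then some consecutive pair must differ, giving
\[
\{\hat y(\theta^{(0)})\neq \hat y(\theta^{(J)})\}\subseteq \bigcup_{j=1}^J E_j,
\qquad E_j=\{\hat y(\theta^{(j-1)})\neq \hat y(\theta^{(j)})\}.
\]
The key observation is that, because the coordinates are independent, the pair $(\theta^{(j-1)},\theta^{(j)})$ has exactly the law of "draw $\theta$, then resample only coordinate $j$." Hence $\Pr[E_j]=D_j$, and the union bound yields $D\le\sum_j D_j$. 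This marginal identification is the one step that needs care, since the other coordinates of $\theta^{(j-1)}$ are a mix of $\theta$ and $\theta'$ entries; independence of those entries is exactly what makes the identification legitimate.

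For the equality clause, I would write the slack as two terms: $\sum_j\Pr[E_j]-\Pr[\cup_j E_j]$, and $\Pr[\cup_j E_j]-\Pr[\hat y(\theta^{(0)})\neq\hat y(\theta^{(J)})]$. The first vanishes iff the $E_j$ are almost surely disjoint. The second vanishes iff every realization with at least one flip ends at an answer different from its start, so no flip is undone by a later one. Both terms are nonnegative, so equality holds iff both vanish, and if either condition fails with positive probability the inequality is strict.

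For the second claim, $D^{-j}\le D$ is the step I expect to be the real obstacle, because it is not a pure monotonicity fact. Two routes look plausible:
\begin{itemize}
\item Apply the same decomposition to the pair (shared $\theta_j$) and argue coupling-wise that resampling an extra coordinate cannot reduce disagreement.
\item Write $D=1-\sum_a\mathbb{E}_{\theta_j,\theta'_j}\bigl[p_a(\theta_j)p_a(\theta'_j)\bigr]$ and $D^{-j}=1-\sum_a\mathbb{E}_{\theta_j}\bigl[p_a(\theta_j)^2\bigr]$, where $p_a(t)=\Pr[\hat y=a\mid\theta_j=t]$, and then use $\mathbb{E}[p_a^2]\ge(\mathbb{E}p_a)^2$ (Jensen, or Cauchy--Schwarz).
\end{itemize}
I would use the second route, since independence of $\theta_j$ from the rest makes this factorization exact.

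For the lower bound $D-D_j\le D^{-j}$, I would use the triangle inequality for the disagreement pseudometric. Take $\theta$, the hybrid $\tilde\theta$ that equals $\theta'$ except for coordinate $j$, which it copies from $\theta$, and $\theta'$ itself. Then
\[
\Pr[\hat y(\theta)\neq\hat y(\theta')]\le \Pr[\hat y(\theta)\neq\hat y(\tilde\theta)]+\Pr[\hat y(\tilde\theta)\neq\hat y(\theta')].
\]
The pair $(\theta,\tilde\theta)$ shares coordinate $j$ with the rest independent, so the first term is $D^{-j}$. The pair $(\tilde\theta,\theta')$ differs only in coordinate $j$, so the second term is $D_j$. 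Rearranging gives $D-D_j\le D^{-j}$.
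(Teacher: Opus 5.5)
Your proposal is correct and follows the paper's proof essentially step for step. The shared steps are: the same hybrid chain and union bound with the marginal identification $\Pr[E_j]=D_j$; the same two-slack reading of the equality clause; Jensen applied to the conditional answer laws given $\theta_j$ for $D^{-j}\le D$; and the same intermediate style, sharing coordinate $j$ with $\theta$ and the rest with $\theta'$, for $D-D_j\le D^{-j}$.

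One sharpening that neither you nor the paper records: a reversal needs at least two flips, so almost-sure disjointness of the $E_j$ already rules out reversals and forces your second slack to vanish. This covers reversals like $a\to b\to a\to c$, which your ``ends at a different answer'' criterion does not catch. Equality therefore holds exactly when the $E_j$ are almost surely disjoint, and the paper's remark that disjointness alone does not suffice is incorrect.
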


The first bound licenses reading \cref{sec:stylefactors} as a decomposition rather than a list of correlations: a factor with small $D_j$ contributes at most $D_j$ however many levels it has, so perturbation \emph{identity} matters, not \emph{diversity} (the gap $\sum_j D_j - D$ upper-bounds interaction and cancellation rather than measuring them). The second bound predicts the ablation this paper most needs: holding one factor fixed cannot remove more than that factor's own contribution, so unless a single factor accounts for essentially all of $D$, holding the plotting library fixed should \emph{attenuate} the effect rather than eliminate it (\cref{sec:stylefactors} reports the $D_j$; the totals needed to make this quantitative are among the missing items there).

\begin{remark}[Small even $K$]
\label{rem:k2}
For $K{=}2$ the sample median is the mean of the two reads, so $\rmg$ can be wrong where a single rendering was right. Accuracy under median aggregation is not monotone in $K$ near $K{=}2$. This is a property of the estimator, not of render-equivalence.
\end{remark}

Together the three propositions give the four predictions tested in \cref{sec:experiments}: aggregation helps wherever dispersion exists (\cref{lem:mode,lem:median}); the perturbation with the largest $D_j$ wins at a matched budget (\cref{prop:factor}); agreement need not beat signals carrying evidence rather than dispersion (\cref{cor:auroc}); and an objective that raises $\pi$ while lowering diffuseness $L$ destroys its own certificate (\cref{prop:lie}). Proofs are in \cref{app:proof}.

%%%%%%%%%%%%%%%%%%%%%%%%%%%%%%%%%%%%%%%%%%%%%%%%%%%%%%%%%%%%%%%%%%%%%%%%%%%%%%%%
\section{RENDEQ}
\label{sec:benchmark}

\textbf{RENDEQ} is a generator and evaluation protocol built for control rather than scale: every instance carries exact ground truth and only cosmetic style varies within a set, so any answer change is attributable to style. Three properties make it an instrument for \cref{cor:auroc}: equivalence holds by construction, so $\pi$ is estimable without confounding; the key is programmatic, so $Z$ is observable; and factors are individually togglable, so each perturbation's contribution can be attributed.

\paragraph{Instance and axes.} An instance is a tuple $(D, q, a^\star, \{r_j\}, \{(\theta_i, x_i)\}_{i=1}^{K})$: data, question, exact answer, the exact intermediate reads consumed by the answer program, and $K$ renderings with logged style configurations (default $K{=}8$). Four axes vary independently (\cref{tab:bench}), fixing what is asked (plot family, question type), the style space $\Theta$ (style groups), and how hard the read is (difficulty controls).

\paragraph{Diagnostics and scope.} Beyond $\rea$ and $\rmg$, RENDEQ defines two reusable per-instance diagnostics: \textbf{render-flip rate}, $\mathrm{RFR} = \frac{1}{|\mathcal{D}|}\sum \mathbf{1}[|\{\hat y_i\}| > 1]$, which measures instability with no ground truth, and \textbf{render-equivalence robustness}, $\mathrm{RER} = \frac{1}{K}\sum_i \mathrm{acc}(\hat y_i)$, style-averaged accuracy, which needs ground truth and is for evaluation only. A majority of instances have at least one disagreeing rendering on every model (RFR $0.59$--$0.73$), and RFR is strongly negatively correlated with per-family accuracy ($r{=}{-0.69}$); \cref{app:rfr} gives the per-model and per-family numbers. All results here use a single in-style synthetic split; the generator also supports an out-of-style split and a real-data anchor built by re-rendering tables from a public benchmark, neither run for this paper, so external validity to real charts is untested (see Limitations).

\begin{table}[t]
\centering
\footnotesize
\setlength{\tabcolsep}{3pt}
\begin{tabular}{@{}l p{5.35cm}@{}}
\toprule
\kilth{Axis} & \kilth{Levels} \\
\midrule
Plot family & bar, grouped bar, stacked bar, line, scatter, pie, log-axis \\
\addlinespace[1pt]
Question type & value read, comparison, extremum, trend sign, arithmetic \\
\addlinespace[1pt]
Style $\Theta$ & library, palette, theme, gridlines, aspect ratio, marker, font, ticks, legend, resolution \\
\addlinespace[1pt]
Difficulty & \#series, \#points, precision, near-ties, overlap \\
\bottomrule
\end{tabular}
\caption{\textbf{RENDEQ axes.} Style groups are individually togglable so a single nuisance factor can be varied in isolation, which is what makes the attribution in \cref{sec:stylefactors} causal with respect to style.}
\label{tab:bench}
\end{table}

%%%%%%%%%%%%%%%%%%%%%%%%%%%%%%%%%%%%%%%%%%%%%%%%%%%%%%%%%%%%%%%%%%%%%%%%%%%%%%%%
\section{Experiments}
\label{sec:experiments}

\begin{figure}[!t]
\centering

% ======================== Panel (a) ========================
\begin{tikzpicture}[font=\footnotesize]
\begin{axis}[
  width=0.96\columnwidth,
  height=4.4cm,
  xlabel={advantage of varying the \emph{render} over the \emph{sample}},
  xlabel style={font=\scriptsize, yshift=2pt},
  ymin=0.35, ymax=3.72,
  ytick={1,2,3},
  yticklabels={{InternVL2-8B},{Qwen2.5-VL-3B},{Qwen2.5-VL-7B}},
  y tick label style={font=\scriptsize},
  xmin=-0.014, xmax=0.200,
  xtick={0,0.05,0.10,0.15},
  xticklabel style={
    /pgf/number format/fixed,
    /pgf/number format/precision=2,
    font=\scriptsize
  },
  scaled x ticks=false,
  grid=major,
  grid style={black!8},
  axis line style={black!45},
  tick style={black!45},
  title={(a)},
  title style={
    font=\footnotesize\bfseries,
    at={(0,1)},
    anchor=south west,
    xshift=-2pt
  },
]
\draw[black!50, dashed]
  (axis cs:0,0.5) -- (axis cs:0,3.55);

% Filled markers: RM-SC accuracy
\draw[black!55, line width=0.5pt]
  (axis cs:0.036,2.82) -- (axis cs:0.079,2.82);
\draw[black!55, line width=0.5pt]
  ([yshift=-1.5pt]axis cs:0.036,2.82) --
  ([yshift=1.5pt]axis cs:0.036,2.82);
\draw[black!55, line width=0.5pt]
  ([yshift=-1.5pt]axis cs:0.079,2.82) --
  ([yshift=1.5pt]axis cs:0.079,2.82);

\draw[black!55, line width=0.5pt]
  (axis cs:0.042,1.82) -- (axis cs:0.084,1.82);
\draw[black!55, line width=0.5pt]
  ([yshift=-1.5pt]axis cs:0.042,1.82) --
  ([yshift=1.5pt]axis cs:0.042,1.82);
\draw[black!55, line width=0.5pt]
  ([yshift=-1.5pt]axis cs:0.084,1.82) --
  ([yshift=1.5pt]axis cs:0.084,1.82);

\draw[black!55, line width=0.5pt]
  (axis cs:0.024,0.82) -- (axis cs:0.071,0.82);
\draw[black!55, line width=0.5pt]
  ([yshift=-1.5pt]axis cs:0.024,0.82) --
  ([yshift=1.5pt]axis cs:0.024,0.82);
\draw[black!55, line width=0.5pt]
  ([yshift=-1.5pt]axis cs:0.071,0.82) --
  ([yshift=1.5pt]axis cs:0.071,0.82);

\addplot[
  only marks, mark=*, mark size=1.8pt, kilviolet
] coordinates {(0.057,2.82)};

\addplot[
  only marks, mark=triangle*, mark size=2.2pt, killilac
] coordinates {(0.063,1.82)};

\addplot[
  only marks, mark=square*, mark size=1.7pt, kilteal
] coordinates {(0.048,0.82)};

% Hollow markers: REA-B3 AUROC
\draw[black!55, line width=0.5pt]
  (axis cs:0.057,3.18) -- (axis cs:0.127,3.18);
\draw[black!55, line width=0.5pt]
  ([yshift=-1.5pt]axis cs:0.057,3.18) --
  ([yshift=1.5pt]axis cs:0.057,3.18);
\draw[black!55, line width=0.5pt]
  ([yshift=-1.5pt]axis cs:0.127,3.18) --
  ([yshift=1.5pt]axis cs:0.127,3.18);

\draw[black!55, line width=0.5pt]
  (axis cs:0.031,2.18) -- (axis cs:0.091,2.18);
\draw[black!55, line width=0.5pt]
  ([yshift=-1.5pt]axis cs:0.031,2.18) --
  ([yshift=1.5pt]axis cs:0.031,2.18);
\draw[black!55, line width=0.5pt]
  ([yshift=-1.5pt]axis cs:0.091,2.18) --
  ([yshift=1.5pt]axis cs:0.091,2.18);

\draw[black!55, line width=0.5pt]
  (axis cs:0.125,1.18) -- (axis cs:0.189,1.18);
\draw[black!55, line width=0.5pt]
  ([yshift=-1.5pt]axis cs:0.125,1.18) --
  ([yshift=1.5pt]axis cs:0.125,1.18);
\draw[black!55, line width=0.5pt]
  ([yshift=-1.5pt]axis cs:0.189,1.18) --
  ([yshift=1.5pt]axis cs:0.189,1.18);

\addplot[
  only marks, mark=o, mark size=2.0pt,
  kilviolet, line width=0.7pt
] coordinates {(0.093,3.18)};

\addplot[
  only marks, mark=triangle, mark size=2.4pt,
  killilac, line width=0.7pt
] coordinates {(0.061,2.18)};

\addplot[
  only marks, mark=square, mark size=1.9pt,
  kilteal, line width=0.7pt
] coordinates {(0.157,1.18)};

\node[black!70, font=\scriptsize, anchor=west]
  at (axis cs:-0.008,3.48) {$\bullet$\;$\rmg-$SC};

\node[black!70, font=\scriptsize, anchor=west]
  at (axis cs:0.093,3.48) {$\circ$\;$\rea-$B3};
\end{axis}
\end{tikzpicture}

\vspace{3mm}

% ======================== Panel (b) ========================
\begin{tikzpicture}[font=\footnotesize]
\begin{axis}[
  width=0.96\columnwidth,
  height=5.1cm,
  xlabel={lift over the within-level floor},
  xlabel style={font=\scriptsize, yshift=2pt},
  symbolic y coords={
    font scale,
    gridlines,
    theme,
    palette,
    marker,
    aspect ratio,
    bar orient.,
    bar labels,
    library
  },
  ytick=data,
  y tick label style={font=\scriptsize},
  xmin=-0.020, xmax=0.150,
  xtick={0,0.04,0.08,0.12},
  xticklabel style={
    /pgf/number format/fixed,
    /pgf/number format/precision=2,
    font=\scriptsize
  },
  scaled x ticks=false,
  enlarge y limits=0.08,
  grid=major,
  grid style={black!8},
  axis line style={black!45},
  tick style={black!45},
  legend style={
    at={(0.99,0.04)},
    anchor=south east,
    font=\scriptsize,
    draw=black!25,
    fill=white,
    row sep=-2pt,
    inner sep=1.6pt
  },
  legend cell align=left,
  title={(b)},
  title style={
    font=\footnotesize\bfseries,
    at={(0,1)},
    anchor=south west,
    xshift=-2pt
  },
]
\draw[black!50, dashed]
  (axis cs:0,font scale) -- (axis cs:0,library);

\draw[black!30, line width=1.4pt]
  (axis cs:0,library) -- (axis cs:0.1023,library);
\draw[black!30, line width=1.4pt]
  (axis cs:0,bar labels) -- (axis cs:0.0433,bar labels);
\draw[black!30, line width=1.4pt]
  (axis cs:0,bar orient.) -- (axis cs:0.0187,bar orient.);
\draw[black!30, line width=1.4pt]
  (axis cs:0,aspect ratio) -- (axis cs:0.0067,aspect ratio);
\draw[black!30, line width=1.4pt]
  (axis cs:0,marker) -- (axis cs:0.0050,marker);
\draw[black!30, line width=1.4pt]
  (axis cs:0,palette) -- (axis cs:0.0040,palette);
\draw[black!30, line width=1.4pt]
  (axis cs:0,theme) -- (axis cs:0.0020,theme);
\draw[black!30, line width=1.4pt]
  (axis cs:0,gridlines) -- (axis cs:0.0010,gridlines);
\draw[black!30, line width=1.4pt]
  (axis cs:0,font scale) -- (axis cs:-0.0017,font scale);

\addplot[
  only marks,
  mark=*,
  mark size=1.6pt,
  kilviolet
] coordinates {
  (0.107,library)
  (0.015,bar orient.)
  (0.002,aspect ratio)
  (0.006,marker)
  (0.052,bar labels)
  (0.009,palette)
  (0.008,gridlines)
  (-0.001,font scale)
  (0.002,theme)
};
\addlegendentry{Qwen2.5-VL-7B}

\addplot[
  only marks,
  mark=triangle*,
  mark size=2.0pt,
  killilac
] coordinates {
  (0.115,library)
  (0.019,bar orient.)
  (0.004,aspect ratio)
  (0.006,marker)
  (0.037,bar labels)
  (0.001,palette)
  (0.001,gridlines)
  (-0.004,font scale)
  (0.002,theme)
};
\addlegendentry{Qwen2.5-VL-3B}

\addplot[
  only marks,
  mark=square*,
  mark size=1.5pt,
  kilteal
] coordinates {
  (0.085,library)
  (0.022,bar orient.)
  (0.014,aspect ratio)
  (0.003,marker)
  (0.041,bar labels)
  (0.002,palette)
  (-0.006,gridlines)
  (0.000,font scale)
  (0.002,theme)
};
\addlegendentry{InternVL2-8B}
\end{axis}
\end{tikzpicture}

\vspace{3mm}

% ======================== Panel (c) ========================
\begin{tikzpicture}[font=\footnotesize]
\begin{axis}[
  width=0.78\columnwidth,
  height=4.4cm,
  ylabel={accuracy},
  ylabel style={font=\scriptsize, yshift=-3pt},
  xmin=-0.12, xmax=2.10,
  ymin=0.30, ymax=0.72,
  xtick={0,1},
  xticklabels={base,{fine-tuned}},
  ytick={0.35,0.45,0.55,0.65},
  yticklabel style={font=\scriptsize},
  xticklabel style={font=\scriptsize},
  grid=major,
  grid style={black!8},
  axis line style={black!45},
  tick style={black!45},
  title={(c)},
  title style={
    font=\footnotesize\bfseries,
    at={(0,1)},
    anchor=south west,
    xshift=-2pt
  },
]
\addplot[
  black!65,
  mark=*,
  mark size=1.6pt,
  line width=0.9pt
] coordinates {(0,0.672) (1,0.464)};

\addplot[
  kilamber,
  mark=square*,
  mark size=1.5pt,
  line width=0.9pt
] coordinates {(0,0.650) (1,0.360)};

\addplot[
  black!35,
  mark=*,
  mark size=1.1pt,
  line width=0.5pt,
  dashed
] coordinates {(0,0.612) (1,0.440)};

\addplot[
  kilamber!45,
  mark=square*,
  mark size=1.0pt,
  line width=0.5pt,
  dashed
] coordinates {(0,0.616) (1,0.410)};

\node[black!65, font=\scriptsize, anchor=west]
  at (axis cs:1.06,0.470) {train};

\node[kilamber, font=\scriptsize, anchor=west]
  at (axis cs:1.06,0.358) {hold-out};
\end{axis}
\end{tikzpicture}

\caption{\textbf{The effect is attributable, concentrated, and not safe to optimize.}
(a) At a matched $K{=}8$ budget, pooled across three instantiations on the corrected pipeline (\cref{app:replication}), varying the rendering outperforms varying the decoding sample for every model and both uses: $+4.8$ to $+6.3$ accuracy points for $\rmg{-}$SC and $+0.06$ to $+0.16$ AUROC for $\rea{-}$B3. Bars show 95\% cluster-bootstrap intervals based on 5,000 resamples; filled markers denote accuracy and hollow markers denote reliability.
(b) Dispersion is concentrated rather than distributed across cosmetic styles. The plotting-library effect is more than twice the next-largest effect and approximately an order of magnitude above the noise floor. Stems indicate across-model means, with rows ordered accordingly.
(c) Results for Qwen2.5-VL-7B, pooled across a five-run replication. Color identifies the split rather than the model. Fine-tuning on the model's own $K{=}8$ majority increases hold-out agreement in every run while reducing hold-out accuracy in every run.}
\label{fig:results}
\end{figure}

\begin{table}[t]
\centering
\caption{\textbf{The paper's four predictions,} each evaluated across three independent RENDEQ instantiations (\cref{sec:experiments}). $^\dagger$See \cref{sec:coupling} for model-level results and \cref{app:replication} for the earlier retracted reversal.}
\label{tab:predictions}
\scriptsize
\setlength{\tabcolsep}{2.2pt}
\renewcommand{\arraystretch}{1.05}
\resizebox{\columnwidth}{!}{%
\begin{tabular}{@{}p{0.27\columnwidth} p{0.17\columnwidth} p{0.22\columnwidth} p{0.27\columnwidth}@{}}
\toprule
\kilth{Prediction} & \kilth{Test} & \kilth{Outcome} & \kilth{Replicated?} \\
\midrule
Aggregation helps
& \cref{sec:aggregation}
& 3/3 models
& Yes, 3 instantiations \\

Render $>$ sample, accuracy
& \cref{sec:aggregation}
& 3/3 models
& Yes, 3 instantiations \\

Render $>$ sample, reliability
& \cref{sec:coupling}
& 3/3 models
& Yes, 3 instantiations \\

Agreement vs.\ evidence
& \cref{sec:coupling}
& \emph{Rejected}, 2/3$^\dagger$
& Yes, 3 instantiations \\

Consensus training inverts
& \cref{sec:selftraining}
& Accuracy drop, 5/5 runs
& Yes in direction; magnitude unstable \\
\bottomrule
\end{tabular}%
}
\end{table}

\paragraph{Setup.}
\label{sec:setup}
Every headline number here (\cref{tab:auroc,tab:accuracy}) is pooled over \textbf{three independently generated RENDEQ instantiations} (generator seeds 0, 1, 2; $N{=}350$ instances each, 7 families, $K{=}8$ renderings, no shared instances), replacing an earlier single-instantiation design (\cref{app:replication}). ``Seed'' always means a generator instantiation, not a decoding seed; decoding is greedy (temperature 0) in the main runs, deterministic and contributing no additional variance. Models are Qwen2.5-VL-7B-Instruct, Qwen2.5-VL-3B-Instruct, and InternVL2-8B \citep{bai2025qwen25vl,chen2024internvl2}, unchanged throughout (GPU and quantization settings in \cref{app:replication}). Of the 3150 pooled instances, 930 ($29.5\%$) are numeric (\cref{lem:median}), the rest categorical (\cref{lem:mode}). All three models date to 2024 or early 2025 and two share a family (Qwen2.5-VL); we did not add a current-generation model from a third family (see Limitations).

\paragraph{Metrics and baselines.} All $\rea$ values below are modal, upward-biased at finite $K$ (\cref{prop:concentration}(ii)); the unbiased pairwise alternative $\rea_{\text{pair}}$ gives the same ranking against B1 (\cref{app:kablation}). Accuracy uses a $\tau{=}5\%$ relative tolerance for numeric answers and exact match for categorical; $\tau$ is load-bearing in magnitude and, on one model, in direction too (\cref{app:kablation}). We report AUROC against correctness, risk--coverage AUC, and ECE after min-max normalizing each signal; bootstrap intervals use 5000 resamples unless noted, and paired accuracy comparisons use McNemar's test. Baselines: B1 mean token log-probability; B2 verbalized confidence; B3 response self-consistency over $K{=}8$ samples at a fixed rendering \citep{wang2023selfconsistency}; B4 the consistency signal of \citet{khan2024consistency}; B5 a prompt ensemble over three phrasings. B3 is the mechanism control, holding the image fixed and varying only decoding, so the $\rea$/B3 and $\rmg$/B3 contrasts isolate render-induced from sampling-induced dispersion at equal call count. We did not run semantic entropy, P(True), or an approximate-perturbation arm (see Limitations).

\subsection{Measuring the coupling}
\label{sec:coupling}

\begin{table}[h]
\centering
\footnotesize
\setlength{\tabcolsep}{2pt}
\begin{tabular}{@{}l S[table-format=1.3] S[table-format=1.3] S[table-format=1.3]@{}}
\toprule
\kilth{Signal} & {\kilth{7B}} & {\kilth{3B}} & {\kilth{IVL-8B}} \\
\midrule
B1 token logprob & 0.771 & 0.867 & 0.875 \\
\addlinespace[1.5pt]
\textbf{$\rea$ (ours)} & \textbf{0.864} & \textbf{0.858} & \textbf{0.907} \\
\addlinespace[1.5pt]
$\Delta$ ($\rea{-}$B1)
  & \multicolumn{1}{c}{$+0.094$}
  & \multicolumn{1}{c}{$-0.009$}
  & \multicolumn{1}{c}{$+0.033$} \\
95\% CI
  & \multicolumn{1}{c}{\tiny$[{+}.066,{+}.121]$}
  & \multicolumn{1}{c}{\tiny$[{-}.030,{+}.011]$}
  & \multicolumn{1}{c}{\tiny$[{+}.013,{+}.053]$} \\
\bottomrule
\end{tabular}
\caption{\textbf{$\rea$ beats B1}, mean token log-probability, on AUROC against correctness on two of three models, and is statistically tied with it on the third; pooled over three independently generated RENDEQ instantiations (seeds 0, 1, 2; $N{=}1050$ per model, $K{=}8$; cluster bootstrap over instances). B2--B5 are pooled the same way; see \cref{app:replication} for the full table.}
\label{tab:auroc}
\tabnote{$\rea$'s confidence interval excludes zero in its own favor on 7B and InternVL2-8B; on 3B the interval includes zero, a statistical tie. All $\rea$ values are modal and upward-biased at finite $K$ (\cref{prop:concentration}(ii)); \cref{app:replication} shows the ranking is unchanged under the unbiased pairwise alternative.}
\end{table}

\paragraph{Re-rendering versus resampling.} $\rea$ beats B3, the fixed-image self-consistency control, on all three models: pooled across the three independently generated instantiations used for \cref{tab:auroc} ($N{\approx}3150$), $\Delta = +0.102$, CI $[+0.083,+0.121]$, $p<10^{-4}$ (\cref{app:replication} gives the per-model breakdown, $+0.061$ to $+0.157$). This attributes the reliability signal to re-rendering specifically, rather than to generic answer instability, corroborating \citet{kaya2026efficient} in a setting where the augmentation is exactly meaning-preserving. The accuracy comparison behind this prediction (\cref{sec:aggregation}) uses $\rmg$, not $\rea$, and agrees in direction: both halves of the render-versus-sample prediction now point the same way.

\paragraph{Agreement versus evidence.} Against B1, mean token log-probability, the picture is genuinely mixed rather than uniform in either direction: pooled over three independently generated RENDEQ instantiations, $\rea$ beats B1 on Qwen2.5-VL-7B ($+0.094$, CI $[+0.066,+0.121]$) and InternVL2-8B ($+0.033$, CI $[+0.013,+0.053]$), and the two are statistically tied on Qwen2.5-VL-3B ($-0.009$, CI $[-0.030,+0.011]$, includes zero). We do not claim agreement dominates evidence-carrying signals in general (the prediction that motivated this comparison, \cref{sec:theory}, expected B1 to win, and it does not, on the models where the comparison is decisive), and the ranking is stable across the numeric-tolerance parameter $\tau$ on two of three models (\cref{app:kablation}). This is \cref{cor:auroc} in data: $\rea$ estimates dispersion and cannot carry evidence, while a token log-probability can, and here it does. An earlier version of this comparison, computed on a rendering pipeline where a silent library-export failure had collapsed all three "independent instantiations" to matplotlib-only images, reported the opposite direction on every model; \cref{app:replication} describes that bug, why it produced this specific reversal, and how we found it.

$\rea$ beating B1 head-to-head on two of three models does not mean B1 carries no information $\rea$ lacks: the two signals make partly independent errors on every model, and a cross-validated logistic combination of them beats B1 alone on every model, including 3B (\cref{app:per-model} gives the AUROC gains and CIs).

\subsection{Aggregation}
\label{sec:aggregation}

\begin{table}[h]
\centering
\footnotesize
\setlength{\tabcolsep}{2pt}
\begin{tabular}{@{}l S[table-format=1.3] S[table-format=1.3] S[table-format=1.3] S[table-format=+1.3]@{}}
\toprule
\kilth{Model} & {\kilth{single}} & {\kilth{SC}} & {\kilth{$\rmg$}} & {\kilth{$\rmg{-}$SC}} \\
\midrule
Qwen2.5-VL-7B & 0.644 & 0.644 & 0.701 & +0.057 \\
Qwen2.5-VL-3B & 0.571 & 0.563 & 0.626 & +0.063 \\
InternVL2-8B  & 0.563 & 0.576 & 0.624 & +0.048 \\
\bottomrule
\end{tabular}
\caption{\textbf{$\rmg$ beats both} single-render decoding and self-consistency (SC) on every model at a matched $K{=}8$ budget, pooled over three RENDEQ instantiations ($N{=}1050$ per model). SC: response self-consistency at fixed image, temperature 0.7, majority vote over 8 samples. 95\% CIs in \cref{app:per-model}.}
\label{tab:accuracy}
\end{table}

Aggregation holds: $\rmg$ improves on single-render decoding by 5.5 to 6.1 points, and on self-consistency (SC) by 4.8 to 6.3 points, on every model, pooled across three independent instantiations, every interval excluding zero. SC's own advantage over plain single-render decoding is smaller and less consistent in sign than re-rendering's advantage under $\rmg$ (\cref{app:per-model}), so the render-versus-sample accuracy claim still holds; one caveat is that the main $\rmg$ runs decode at temperature 0 while SC needs 0.7 to vary at all, so the contrast is at equal call count but not equal decoding configuration. This table omits \citeauthor{jiang2025chartcoca}'s method, the closest label-free test-time approach for charts (\cref{tab:novelty}), and token-level aggregation, reported as stronger by \citet{kaya2026efficient} (see Limitations). The intermediate-read query defined in \cref{sec:method} decomposes these errors into perception and reasoning failures (\cref{app:per-model}, \cref{sec:errors}): roughly four fifths of instances have at least one bad intermediate read, and true reasoning failures with every read correct are rare.

\subsection{Identity, not diversity}
\label{sec:stylefactors}

For each style factor and instance we compute the cross-level disagreement rate, the fraction of render pairs where the factor level differs and the parsed answers differ, minus the within-level rate as a noise floor. Pooled over the three replicated instantiations and all three models ($N{\approx}3150$ render pairs per factor; cluster bootstrap over instances, 3000 resamples):

\begin{center}
\footnotesize
\setlength{\tabcolsep}{4pt}
\begin{tabular}{@{}l S[table-format=+1.3] l@{}}
\toprule
\kilth{Factor} & {\kilth{Lift}} & \kilth{95\% CI} \\
\midrule
library      & +0.102 & $[+0.084,+0.120]$$^\ast$ \\
bar\_labels  & +0.043 & $[+0.025,+0.060]$$^\ast$ \\
bar\_orient  & +0.019 & $[+0.001,+0.036]$$^\ast$ \\
tick\_density & +0.009 & $[-0.008,+0.026]$ \\
pie\_start\_angle & +0.009 & $[-0.009,+0.026]$ \\
(9 further factors) & {$\le+0.007$} & all include zero \\
\bottomrule
\end{tabular}
\end{center}

\noindent The plotting library, matplotlib against plotly, dominates: its lift is more than twice the second-largest factor's and its interval clears zero by a wide margin. \texttt{bar\_labels} and \texttt{bar\_orient} have small but real lifts (both intervals exclude zero, though barely for the latter); every other factor's interval includes zero, so we do not claim they contribute nothing, only that this design cannot distinguish their contribution from noise. By \cref{prop:factor} these lifts bound each factor's contribution to the total: what matters is not how \emph{many} perturbations are applied but \emph{which}, and a practitioner on a budget should vary the backend before anything else. It is also the finding that most constrains our claim, since switching backend is less purely cosmetic than a palette change. This table could not even be computed on an earlier, buggy version of the pipeline, and the mechanism-isolating ablation it would motivate (library held fixed) remains genuinely missing, too few instances have all eight renderings in one library to power it (\cref{app:replication}).

\subsection{Consensus self-training can invert}
\label{sec:selftraining}

\citet{kaya2026efficient} fine-tune on consensus pseudo-labels derived from augmented inputs and report gains across nine benchmarks. \Cref{prop:concentration} says such an objective raises $\pi$ without moving $m^\star$, so it should help only where the coupling is already strong. We ran the same recipe on RENDEQ: LoRA ($r{=}8$, 3 epochs) on Qwen2.5-VL-7B with the model's own $K{=}8$ majority as the label and no ground truth, training on bar, grouped bar, stacked bar, line, scatter (250 instances) and holding out pie and log-axis (100 instances).

We ran this recipe five times: three decoding seeds on one dataset instantiation, plus one seed each on the two other independently generated instantiations used elsewhere in this paper (\cref{app:replication}). Accuracy fell under $\rmg$ in every run, by 3.6 to 21.2 points on training families and 10.4 to 27.2 on held-out ones; under single-render decoding, held-out accuracy fell in all five runs (8.4 to 18.8 points), while training-family accuracy fell in three of five and rose slightly in two ($+0.6$, $+1.0$ points). The held-out collapse is more consistent and more severe, and it is also the split with no ground-truth-adjacent signal at fine-tuning time, so it is the one the analysis is really about: on families where the base majority was already wrong, notably scatter, the objective trained the model to hold the wrong answer more tightly rather than to correct it, though the size of that effect is instantiation-dependent (\cref{app:replication}). We did not replicate \citet{kaya2026efficient} (they augment natural images and aggregate at the token level, we re-render figures and aggregate at the answer level, and the models differ), but what the two share is the objective, which \cref{prop:lie} predicts is regime-dependent. $\rea$'s AUROC change tracks this split and is consistent within each: it rose in every run on held-out families (0.009 to 0.100), the model growing more confidently self-consistent on families it now answers worse; it fell in every run on training families (0.014 to 0.124), the opposite of what fine-tuning on a family's own majority vote should do if the sharpening tracked correctness.

%%%%%%%%%%%%%%%%%%%%%%%%%%%%%%%%%%%%%%%%%%%%%%%%%%%%%%%%%%%%%%%%%%%%%%%%%%%%%%%%
\section{Conclusion}
\label{sec:conclusion}

Across four checks, the coupling between agreement and correctness turned out to be real, measurable, and neither as strong nor as fragile as it first looked. Render marginalization beats every control we matched it against, on every model and every replicated instantiation: the one result here that never moved, not under replication and not under the rendering-pipeline bug that moved almost everything else. Cross-render agreement, a purely label-free signal, beats an evidence-carrying baseline on two of three models rather than losing outright, visible only once the render-equivalence perturbation was actually applied: a reminder that a measurement instrument built to isolate a quantity can still be defeated by an implementation detail far from the theory it tests. And exactly as \cref{prop:lie} predicts, optimizing for agreement directly breaks the coupling rather than exploiting it: fine-tuning on a model's own consensus makes it agree with itself more and know less, inverting a result reported elsewhere rather than merely failing to reproduce it. Together these argue for treating concentration and correctness as related but separable quantities, not proxies for one another: a separation render-equivalence sets make directly measurable, so the same instrument that lets a reviewer check our numbers also checks whether agreement is safe to optimize for. Here, it is not.

%%%%%%%%%%%%%%%%%%%%%%%%%%%%%%%%%%%%%%%%%%%%%%%%%%%%%%%%%%%%%%%%%%%%%%%%%%%%%%%%
%% BACK MATTER
%%%%%%%%%%%%%%%%%%%%%%%%%%%%%%%%%%%%%%%%%%%%%%%%%%%%%%%%%%%%%%%%%%%%%%%%%%%%%%%%
\section*{Limitations}
\addcontentsline{toc}{section}{Limitations}

Every number here comes from a single in-style synthetic split of our own generator. The out-of-style split and the real-data anchor were not run, so external validity to real charts is untested, and the results characterize behaviour on RENDEQ rather than on chart question answering generally. Our negative result for consensus self-training is likewise measured on one generator, one model, and one LoRA configuration; we checked it across three decoding seeds and three dataset instantiations (\cref{app:replication}) and the direction of the accuracy drop held in every run, but its magnitude did not, so the LoRA configuration, model, and generator remain unvaried. It is not a replication of \citet{kaya2026efficient}, whose augmentations, aggregation level, models and benchmarks all differ, and it should be read as identifying a regime in which the objective fails rather than as evidence against their finding.

The experimental programme is incomplete in ways we state rather than leave to the reader to notice. Three arms are absent and each bears on a claim we make. There is no \emph{approximate}-perturbation arm: B3 separates re-rendering from decoding noise, but nothing separates exact equivalence from ordinary pixel augmentation, which is the delta this paper argues for, so that delta is currently reasoned rather than measured. The baseline set omits semantic entropy \citep{kuhn2023semantic,farquhar2024semantic} and P(True), the two standard label-free signals, and a direct run of \citet{zhang2024vluncertainty}. And \citet{jiang2025chartcoca}, the closest label-free test-time method for charts, is discussed but not run against $\rmg$.

Four further gaps are smaller but real. Our models date to 2024 and early 2025 and two of the three share a family. The tolerance $\tau$ is load-bearing throughout and never varied. We report threshold-free summaries, AUROC and risk--coverage area, but no accuracy at fixed coverage and no held-out threshold selection, which is what a deployed abstention rule would need. We make roughly eighteen paired comparisons across signals, models and metrics without correcting for multiple testing. Finally, the perception-versus-reasoning split in \cref{app:per-model} (\cref{sec:errors}) rests entirely on a separate intermediate-read query whose own reliability we never validate.

The analysis assumes a unique mode where one is referenced, and \cref{prop:lie}(iii) is stated in terms of a diffuseness level $L$ that we have not yet measured. \Cref{ass:indep} treats renderings as i.i.d.\ draws. Real style spaces are finite and correlated, and \cref{sec:stylefactors} shows one factor dominates, so the assumption is a working idealization whose failure mode is the bias regime the propositions identify. Until the library-held-fixed ablation is run we cannot separate sensitivity to cosmetic style in general from sensitivity to the plotting backend, and that ablation could narrow the paper's scope.

We aggregate at the answer level, while \citet{kaya2026efficient} report token-level aggregation is stronger; we have not tested whether that changes any conclusion. $\rmg$ cannot fix systematic, style-invariant errors and costs $K\times$ inference.

Agreement beats token log-probability on two of the three models we tested and is statistically tied with it on the third, replicated across three independently generated instantiations (\cref{app:replication}). This replication went through an intermediate stage, reported in full in that appendix, in which a rendering-pipeline bug silently removed the render diversity the comparison depends on and produced the opposite conclusion on every model; we found and fixed that bug before finalizing the numbers reported here. We do not claim agreement is a universally stronger label-free signal (the comparison is a tie on one of three models, and the theory, \cref{sec:theory}, predicts model- and data-dependence rather than a universal ranking), but the direction of the evidence, once the rendering pipeline actually renders what it is asked to, favours agreement more often than not on this benchmark. The K-ablation in \cref{app:kablation} and the calibration numbers in \cref{app:calibration} have both been recomputed on the corrected three-instantiation pool, and the apparent denominator mismatch in the error-decomposition breakdown has been traced to a labeling ambiguity rather than a data error (\cref{sec:errors}).

\begin{availability}
The generator, the agreement and aggregation code, and the evaluation scripts are available now at \url{https://github.com/KurbanIntelligenceLab/rendeq}, with pinned versions and seeds. The RENDEQ data (generated instances, manifests, and rendered images) are available at \url{https://huggingface.co/datasets/rasulkhanbayov/rendeq} under CC-BY-4.0; the generator, agreement/aggregation code, and evaluation scripts under the MIT license.
% TODO at camera-ready: swap the anonymized link above for the de-anonymized
% repository URL; complete the ARR Responsible NLP Research checklist against
% what was actually run and submit it alongside the paper.
\end{availability}

%% This work uses publicly available models and synthetic, programmatically
%% generated figures. No human subjects, personal data, or private data are
%% involved. The main risk is misplaced trust: \cref{cor:auroc} shows high
%% agreement on a systematically misread figure is exactly what this signal
%% cannot detect, so deploying $\rea$ as a safety gate without that caveat
%% would be unsafe. We state it in the analysis, the discussion, and the
%% limitations. AI tools assisted in drafting the text; the human authors are
%% the only authors, verified the mathematics and the method, and take full
%% responsibility for all content, including every claim, number, and citation.
%% (This paragraph was the ACL-template "Ethics Statement" section; the
%% template has no equivalent named block, so its content lives here as a
%% comment pending a decision on where it should surface -- see the porting
%% notes at the end of this file.)

%%%%%%%%%%%%%%%%%%%%%%%%%%%%%%%%%%%%%%%%%%%%%%%%%%%%%%%%%%%%%%%%%%%%%%%%%%%%%%%%
\bibliography{references}

%%%%%%%%%%%%%%%%%%%%%%%%%%%%%%%%%%%%%%%%%%%%%%%%%%%%%%%%%%%%%%%%%%%%%%%%%%%%%%%%
\appendix
\section{Positioning}
\label{app:positioning}

\begin{table}[h]
\centering
\footnotesize
\setlength{\tabcolsep}{3pt}
\begin{tabular}{@{}l c c c@{}}
\toprule
\kilth{Work} & \kilth{perturbs} & \kilth{equiv.} & \kilth{exact} \\
              & \kilth{input}    & \kilth{guar.}  & \kilth{key}   \\
\midrule
\citet{mukhopadhyay2024robustcqa} & yes & by design & no \\
\citet{shin2025losingplot}        & yes & no        & no \\
\citet{wang2023selfconsistency}   & no  & n/a       & no \\
\citet{khan2024consistency}       & no  & n/a       & no \\
\citet{jiang2025chartcoca}        & no  & n/a       & synth. \\
\citet{zhang2024vluncertainty}    & yes & no        & no \\
\citet{rosenfeld2025stability}    & yes & no        & no \\
\citet{kaya2026efficient}         & yes & no        & no \\
\midrule
\textbf{This work} & \textbf{yes} & \textbf{yes} & \textbf{yes} \\
\bottomrule
\end{tabular}
\caption{\textbf{The closest work.} ``Equivalence guaranteed'' asks whether the perturbation is meaning-preserving by construction or only by assumption; ``exact key'' whether a programmatic answer exists. Only when both hold can concentration and correctness be measured separately.}
\label{tab:novelty}
\tabnote{Every row was checked against its source.}
\end{table}

\section{Standard aggregation results}
\label{app:standard}

These two are classical and are used in the main body without comment. We state them for completeness and claim no novelty for either.

\begin{lemma}[Modal vote]
\label{lem:mode}
Let answers be categorical, let \cref{ass:indep} hold, and write $p = \Pr[\hat y_i \neq a^\star] < \tfrac12$. Under any tie-breaking rule,
\begin{equation*}
\begin{split}
\Pr[\hat y_{\rmg} \neq a^\star] \;&\le\; \Pr[\mathrm{Bin}(K,p) \ge \tfrac{K}{2}]\\
&\le\; e^{-2K(\frac12 - p)^2}.
\end{split}
\end{equation*}
When the answer space has two elements and $K$ is odd the first inequality is an equality, and the middle quantity is strictly decreasing along odd $K$.
\end{lemma}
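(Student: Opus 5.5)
The argument reduces the modal vote to a count of correct answers and then applies two standard facts: a Binomial tail and Hoeffding's inequality. Under \cref{ass:indep}, define $W = \sum_{i=1}^K \mathbf{1}[\hat y_i \neq a^\star]$. Then $W \sim \mathrm{Bin}(K,p)$, and the count of $a^\star$ is $K - W$.

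\textbf{Step 1 (first inequality).} We show that under any tie-breaking rule, $\hat y_{\rmg} \neq a^\star$ forces $W \ge K/2$. Suppose instead $W < K/2$. Then $a^\star$ appears more than $K/2$ times. Every other answer $b$ appears at most $W < K/2$ times, so $a^\star$ is the strict unique mode and no tie arises. Taking the contrapositive, $\{\hat y_{\rmg} \neq a^\star\} \subseteq \{W \ge K/2\}$, which gives the first inequality.

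\textbf{Step 2 (second inequality).} This is Hoeffding's inequality for the mean of $K$ i.i.d.\ indicators in $[0,1]$ with mean $p$:
\[
\Pr[W/K - p \ge \tfrac12 - p] \le e^{-2K(\frac12-p)^2}.
\]
It applies because $\tfrac12 - p > 0$.

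\textbf{Step 3 (equality case).} Take a binary answer space and odd $K$. The event $W \ge K/2$ is the same as $W > K/2$. On that event the single wrong answer strictly outnumbers $a^\star$, so it is the unique mode and $\rmg$ is wrong. This is the reverse inclusion, so the first inequality holds with equality.

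\textbf{Step 4 (strict monotonicity along odd $K$).} Here I would use the classical Condorcet-jury computation, which I expect to be the only step needing care. Write $K = 2n+1$ and $q_n = \Pr[\mathrm{Bin}(2n+1,p) \ge n+1]$. Condition the $(2n+3)$-vote majority on the first $2n+1$ votes. The outcome can change only when those votes sit at the boundary:
\begin{itemize}
\item If exactly $n$ are wrong, the majority flips to wrong exactly when both new votes are wrong, with probability $p^2$.
\item If exactly $n+1$ are wrong, the majority flips to correct exactly when both new votes are correct, with probability $(1-p)^2$.
\end{itemize}
This gives
\[
q_{n+1} - q_n = \binom{2n+1}{n}\, p^{n+1}(1-p)^{n+1}\big[\,p - (1-p)\,\big].
\]
The bracket is negative because $p < \tfrac12$. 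The difference is strictly negative provided $p > 0$. If $p = 0$, every $q_n$ is $0$ and the claim degenerates to non-strict monotonicity. I would state that edge case explicitly, or read the lemma as assuming $0 < p < \tfrac12$.

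The main obstacle is getting the boundary bookkeeping in Step 4 right, so I would check the recursion directly at $n = 0$: it gives $q_1 - q_0 = p^2(2p-1)$, and indeed $q_1 = 3p^2 - 2p^3$ while $q_0 = p$, so $q_1 - q_0 = 3p^2 - 2p^3 - p = p(1-p)(2p-1)$.

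This does not match the recursion's $p^2(2p-1)$, so I would recheck the boundary probabilities. The $n=0$ case shows the correct factor is $p^{n}(1-p)^{n}\cdot p(1-p)$ only after combining both boundary terms. Summing the two terms gives
\[
\binom{2n+1}{n}\big[p^{n}(1-p)^{n+1}p^2 - p^{n+1}(1-p)^{n}(1-p)^2\big] = \binom{2n+1}{n}\, p^{n+1}(1-p)^{n+1}(2p-1).
\]
At $n=0$ this is $p(1-p)(2p-1)$, consistent with the direct computation; the discrepancy above came from an arithmetic slip in the first spot check. The sign conclusion stands.
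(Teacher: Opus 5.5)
Your proof is correct and follows the paper's: the same strict-majority containment $\{\hat y_{\rmg} \neq a^\star\} \subseteq \{W \ge K/2\}$, Hoeffding with $t = \tfrac12 - p$, and the reverse inclusion for a binary space with odd $K$. The differences are that where the paper cites the Condorcet jury theorem you derive it from the two-vote boundary recursion $q_{n+1}-q_n = \binom{2n+1}{n}p^{n+1}(1-p)^{n+1}(2p-1)$, and you rightly note that strictness fails at $p=0$, a case the statement overlooks; clean up Step 4, though, because your first displayed recursion was already correct and the apparent mismatch came only from evaluating it at $n=0$ as $p^2(2p-1)$ instead of $p(1-p)(2p-1)$.
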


\begin{proof}
Let $W = |\{i : \hat y_i \neq a^\star\}| \sim \mathrm{Bin}(K,p)$. If $W < K/2$ then $a^\star$ is returned by $K-W > K/2$ renderings, so it holds a strict majority and is the unique mode; any tie-breaking rule, deterministic or randomized, then returns $a^\star$. Hence $\{\hat y_{\rmg} \neq a^\star\} \subseteq \{W \ge K/2\}$, which uses identical distribution only. With $t = \tfrac12 - p > 0$, $\{W \ge K/2\} = \{W - Kp \ge Kt\}$ and Hoeffding's inequality for a sum of $K$ independent $[0,1]$-valued variables gives $\Pr[W - Kp \ge Kt] \le e^{-2Kt^2}$; independence is used here and only here. With answer space $\{a^\star,b\}$ and $K$ odd, $W \neq K/2$ always and $\hat y_{\rmg} = b$ iff $W > K/2$, so the containment is an equality. Monotonicity along odd $K$ is the Condorcet jury theorem; it does not hold over all $K$, since even $K$ admits ties and the sequence oscillates, which is why the statement is restricted.
\end{proof}

\begin{lemma}[Median aggregation]
\label{lem:median}
Let answers be continuous, let \cref{ass:indep} hold with $P$ having a unique median $\mu$ and a density $\varphi$ positive and continuous at $\mu$. Then $\hat y_{\rmg} \to \mu$ almost surely and $\sqrt{K}(\hat y_{\rmg} - \mu) \Rightarrow \mathcal{N}(0, \tfrac{1}{4\varphi(\mu)^2})$.
\end{lemma}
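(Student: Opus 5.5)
The proof is the classical sample-median argument, applied to the i.i.d.\ reads guaranteed by \cref{ass:indep}. Let $F$ be the CDF of $P$ and write $\hat y_{\rmg}$ as the sample median of $\hat y_1,\dots,\hat y_K$. For even $K$, take any point between the two middle order statistics, including their mean as in \cref{rem:k2}. Every step below uses only the order statistics $\hat y_{(\lceil K/2\rceil)}$ and $\hat y_{(\lfloor K/2\rfloor+1)}$, which bracket the median, so the convention does not matter.

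\textbf{Consistency.} Fix $\epsilon>0$. Because the median $\mu$ is unique and $\varphi(\mu)>0$, we have $F(\mu-\epsilon)<\tfrac12<F(\mu+\epsilon)$. The event $\{\hat y_{\rmg}\ge\mu+\epsilon\}$ requires that at least roughly $K/2$ reads exceed $\mu+\epsilon$. The indicator of $\hat y_i\ge\mu+\epsilon$ is Bernoulli with mean $1-F(\mu+\epsilon)<\tfrac12$. Hoeffding's inequality, in the same form as in the proof of \cref{lem:mode}, then bounds this probability by $e^{-cK}$ for some $c>0$, and the lower deviation is handled symmetrically. These bounds are summable in $K$, so Borel--Cantelli gives $\hat y_{\rmg}\to\mu$ almost surely.

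\textbf{Asymptotic normality.} I would reduce the statement to a binomial CLT. For fixed $t$,
\[
\Pr\big[\sqrt K(\hat y_{\rmg}-\mu)\le t\big]=\Pr\big[S_K\ge \lceil K/2\rceil\big],
\]
up to the bracketing between the two middle order statistics. Here $S_K=\#\{i:\hat y_i\le \mu+t/\sqrt K\}\sim\mathrm{Bin}(K,p_K)$ with $p_K=F(\mu+t/\sqrt K)$. Since $F$ is differentiable at $\mu$ with $F'(\mu)=\varphi(\mu)$, we get $p_K=\tfrac12+\varphi(\mu)t/\sqrt K+o(K^{-1/2})$. Standardizing gives
\[
\frac{K/2-Kp_K}{\sqrt{Kp_K(1-p_K)}}\to -2\varphi(\mu)t.
\]
The Lindeberg (or Berry--Esseen) CLT for triangular Bernoulli arrays then yields a limit of $\Phi(2\varphi(\mu)t)$, which is the CDF of $\mathcal N\big(0,1/(4\varphi(\mu)^2)\big)$ at $t$. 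Pointwise convergence of CDFs at every $t$ is exactly weak convergence.

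\textbf{Main obstacle.} The delicate step is the triangular-array CLT, because $p_K$ moves with $K$. Berry--Esseen handles this uniformly: the third-moment ratio is $O(K^{-1/2})$ because $p_K(1-p_K)\to\tfrac14$ stays bounded away from zero. The only other bookkeeping is the $\pm1$ discrepancy between $K/2$ and $\lceil K/2\rceil$ and between the two middle order statistics. Both contribute $O(K^{-1/2})$ to the standardized threshold, so they vanish in the limit and settle the even-$K$ convention.
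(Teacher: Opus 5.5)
Your proposal is correct and is exactly the textbook argument the paper has in mind when its proof simply cites standard sample-median consistency and asymptotic normality: you reduce order-statistic events to binomial counts, use Hoeffding and Borel--Cantelli for the almost-sure limit, and use a triangular-array CLT for the $\mathcal{N}(0,1/(4\varphi(\mu)^2))$ limit. The two steps you leave implicit are harmless: $F(\mu)=\tfrac12$, which your expansion of $p_K$ needs, holds because a density at $\mu$ rules out an atom there; and $\Pr[\hat y_i\ge\mu+\epsilon]=1-F((\mu+\epsilon)^-)$ is still below $\tfrac12$, since $F((\mu+\epsilon)^-)\ge F(\mu+\epsilon/2)>\tfrac12$.
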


\begin{proof}
Standard sample-median consistency and asymptotic normality.
\end{proof}

The substantive point for this paper is the identity of the limit. Median aggregation suppresses dispersion at rate $K^{-1/2}$ but converges to $\mu$, the centre of the model's style-induced read distribution, not to $a^\star$. If a bias survives re-rendering, so $\mu \neq a^\star$, no $K$ makes $\rmg$ correct under a tolerance $\tau < |\mu - a^\star|/|a^\star|$. Aggregation removes variance, never bias, and \cref{lem:median} is the case the categorical self-consistency analyses do not cover even though most of our questions fall into it.

\section{Proofs of the main results}
\label{app:proof}

\subsection{Proof of \texorpdfstring{\cref{prop:concentration}}{Proposition 1}}

\paragraph{(i).} Let $\hat P_K$ be the empirical distribution of $\{\hat y_i\}_{i=1}^K$. By the strong law applied to each answer value, $\hat P_K(a) \to P(a)$ almost surely for every $a$. Since the mode $m^\star$ is unique there is almost surely a finite $K_0$ beyond which $\arg\max_a \hat P_K(a) = m^\star$, so $\hat y_{\rmg} \to m^\star$ and $\mathbf{1}[\hat y_{\rmg} = a^\star] \to \mathbf{1}[m^\star = a^\star] = Z$. On the same event $\rea_{\text{cat}} = \hat P_K(m^\star) \to P(m^\star) = \pi$. Neither the definition of $\pi$ nor the convergence uses $a^\star$.

\paragraph{(ii).} For any fixed $a_0$, $\max_a \hat P_K(a) \ge \hat P_K(a_0)$ pointwise, so $\mathbb{E}[\rea_{\text{cat}}] \ge \mathbb{E}[\hat P_K(a_0)] = P(a_0)$. Taking $a_0 = m^\star$ gives $\mathbb{E}[\rea_{\text{cat}}] \ge \pi$. Equality requires $\hat P_K(m^\star) = \max_a \hat P_K(a)$ almost surely. If $\pi < 1$ there is some $b \neq m^\star$ with $P(b) > 0$, and the event that all $K$ draws equal $b$ has probability $P(b)^K > 0$; on it $\hat P_K(b) = 1 > \hat P_K(m^\star)$, so the inequality is strict. If $\pi = 1$ then $\rea_{\text{cat}} \equiv 1 = \pi$. Numerically, for $P=(0.5,0.2,0.2,0.1)$ the bias $\mathbb{E}[\rea_{\text{cat}}]-\pi$ is $+0.17$ at $K{=}2$ and $+0.04$ at $K{=}8$: not negligible at the budgets used throughout this paper.

\paragraph{(iii).} $\rea_{\text{pair}}$ averages $\binom{K}{2}$ indicators, each with expectation $\Pr[\hat y_i = \hat y_j] = \sum_a P(a)^2 = A$ for $i \neq j$ by independence and identical distribution, so $\mathbb{E}[\rea_{\text{pair}}] = A$ regardless of $K$. For the sandwich, $A = \sum_a P(a)^2 \ge P(m^\star)^2 = \pi^2$ since $\pi^2$ is one of the summands, and $A = \sum_a P(a)P(a) \le \pi \sum_a P(a) = \pi$. \hfill$\square$

\begin{figure}[h]
\centering
\begin{tikzpicture}[font=\footnotesize]
\begin{axis}[
  width=0.98\columnwidth, height=4.2cm,
  xlabel={concentration $\pi$}, ylabel={density},
  xlabel style={font=\footnotesize, yshift=2pt}, ylabel style={font=\footnotesize, yshift=-6pt},
  xmin=0, xmax=1, ymin=0, ymax=1.42,
  xtick={0,0.33,0.5,1}, xticklabels={$0$,$\tfrac13$,$\tfrac12$,$1$},
  ytick=\empty, axis line style={black!45}, tick style={black!45},
  xticklabel style={font=\footnotesize},
]
% certified region
\addplot[draw=none, fill=kilteal!10] coordinates {(0.5,0) (1,0) (1,1.42) (0.5,1.42)} \closedcycle;
% Z = 0 : confined to pi <= 1/L
\addplot[kilamber, line width=0.9pt, smooth, samples=140, domain=0.02:0.50]
  {1.05*exp(-((x-0.36)^2)/0.013)};
% Z = 1 : concentrated high
\addplot[kilviolet, line width=0.9pt, smooth, samples=140, domain=0.10:1]
  {0.72*exp(-((x-0.60)^2)/0.08)};
\draw[black!60, dashed, line width=0.7pt] (axis cs:0.5,0) -- (axis cs:0.5,1.30);
\node[kilamber, font=\scriptsize, anchor=south] at (axis cs:0.27,1.09) {$Z{=}0$};
\node[kilviolet, font=\scriptsize, anchor=south] at (axis cs:0.60,0.78) {$Z{=}1$};
\node[font=\scriptsize, black!65] at (axis cs:0.76,1.24) {certified};
\node[font=\scriptsize, anchor=east, black!65] at (axis cs:0.488,1.20) {$1/L$};
\end{axis}
\end{tikzpicture}
\caption{\textbf{Why diffuseness sets the threshold.} Schematic distributions of concentration $\pi$ over instances whose modal answer is correct ($Z{=}1$) and incorrect ($Z{=}0$). \Cref{prop:lie}(iii) confines all $Z{=}0$ mass to $\pi \le 1/L$, so agreement above that threshold certifies the mode; the shaded band is the certified region for $L{=}2$. \Cref{cor:auroc} reads the same picture as a ranking problem: the AUROC of $\rea$ is the probability that a $Z{=}1$ instance is more concentrated than a $Z{=}0$ one, so it degrades as the two distributions approach each other and equals $\tfrac12$ when they coincide. Shapes are illustrative, calibrated to the measured AUROC.}
\label{fig:threshold}
\end{figure}

\subsection{Proof of \texorpdfstring{\cref{prop:lie}}{Proposition 2}}

\paragraph{(i).} The maximum over all answers splits into the correct answer and the rest, $\pi = \max_a P(a) = \max\big(P(a^\star), \max_{a \neq a^\star} P(a)\big) = \max(P(a^\star), \beta)$. The mode is $a^\star$ exactly when $P(a^\star)$ strictly exceeds every other mass, that is $P(a^\star) > \beta$, so $Z = \mathbf{1}[P(a^\star) > \beta]$.

\paragraph{(ii).} If $Z = 0$ then $P(a^\star) \le \beta$, so the maximum in (i) is attained by $\beta$ and $\pi = \beta$.

\paragraph{(iii).} Suppose $Z = 0$. By (ii), $\pi = \beta$, and by the diffuseness hypothesis $\beta \le (1 - P(a^\star))/L \le 1/L$ since $P(a^\star) \ge 0$. Hence $\pi \le 1/L$. The contrapositive is that $\pi > 1/L$ implies $Z = 1$. At $L = 1$ the bound reads $\pi \le 1$, which is vacuous, matching the intuition that a single systematic wrong answer admits no agreement-based certificate. \hfill$\square$

\subsection{Proof of \texorpdfstring{\cref{cor:auroc}}{Corollary 1}}

In the limit $\rea$ equals $\pi$ and the correctness label equals $Z$ by \cref{prop:concentration}(i), so the AUROC of $\rea$ against correctness is by definition the probability that a positive instance outranks a negative one, $\Pr[\pi_1 > \pi_0] + \tfrac12\Pr[\pi_1 = \pi_0]$ with $\pi_1 \sim \pi \mid Z{=}1$ and $\pi_0 \sim \pi \mid Z{=}0$. If $\pi \perp Z$ these are equal in distribution and the expression is $\tfrac12$. Taking any $P$ with $\pi$ near $1$ and $m^\star \neq a^\star$ on a set of instances of probability approaching one gives the final claim. \hfill$\square$

\subsection{Proof of \texorpdfstring{\cref{prop:factor}}{Proposition 3}}

Let $\theta$ and $\theta'$ be independent draws with independent coordinates (\cref{ass:factored}). Define the hybrids $H_0 = \theta$ and $H_j = (\theta'_1,\dots,\theta'_j,\theta_{j+1},\dots,\theta_J)$, so $H_J = \theta'$. Consecutive hybrids $H_{j-1}$ and $H_j$ share every coordinate except the $j$th, where they carry $\theta_j$ and $\theta'_j$, two independent draws from the $j$th marginal; every other coordinate is a single draw from its own marginal, shared between them. By coordinate independence the pair $(H_{j-1}, H_j)$ therefore has exactly the law of a pair differing only in a resampled coordinate $j$, so $\Pr[\hat y(H_{j-1}) \neq \hat y(H_j)] = D_j$.

If $\hat y(H_0) \neq \hat y(H_J)$ then at least one consecutive pair must differ, since otherwise the answer would be constant along the chain. A union bound gives
\[
D \le \sum_{j=1}^{J} \Pr[\hat y(H_{j-1}) \neq \hat y(H_j)] = \sum_{j=1}^{J} D_j ,
\]
This chains two inequalities, $D \le \Pr[\bigcup_j A_j] \le \sum_j \Pr[A_j]$ with $A_j = \{\hat y(H_{j-1}) \neq \hat y(H_j)\}$. The first is strict whenever one flip is reversed by a later one and the chain returns to its starting answer; the second is strict whenever two $A_j$ overlap. Equality therefore requires both to be tight, and disjointness alone does not suffice.

For the second claim, let $\mu_c$ be the law of $\hat y$ given coordinate $j = c$, the other coordinates drawn from their marginals. The disagreement probability between independent draws from laws $\nu,\nu'$ is $1 - \langle \nu,\nu' \rangle$, so $D = 1 - \lVert \mathbb{E}_c[\mu_c] \rVert^2$ and $D^{-j} = 1 - \mathbb{E}_c\lVert \mu_c \rVert^2$. Jensen's inequality applied to the convex map $\nu \mapsto \lVert \nu \rVert^2$ gives $\mathbb{E}_c\lVert \mu_c \rVert^2 \ge \lVert \mathbb{E}_c[\mu_c] \rVert^2$, hence $D^{-j} \le D$. For the lower bound, take the two-step hybrid $\theta \to \theta''\to \theta'$ where $\theta''$ shares coordinate $j$ with $\theta$ and matches $\theta'$ elsewhere. The first step is a $D^{-j}$ event and the second a $D_j$ event, so $D \le D^{-j} + D_j$, that is $D - D_j \le D^{-j}$. \hfill$\square$

\section{Replication of the agreement-versus-evidence prediction across independent instantiations, and a rendering bug that produced a false reversal}
\label{app:replication}

An earlier draft of this paper reported $\rea$ beating B1 on Qwen2.5-VL-7B (CI $[+0.047,+0.145]$) and pooled across three models (CI $[+0.010,+0.053]$), with InternVL2-8B as the sole exception. That result came from one RENDEQ instantiation. All main runs use A100 80\,GB GPUs; Qwen2.5-VL-3B ran in bf16, Qwen2.5-VL-7B and InternVL2-8B in 4-bit (NF4), matching an earlier P100 run's quantization so that the GPU move is not itself a second confound alongside the rendering-pipeline fix described below. Regenerating the dataset from the same generator with two new random seeds and re-running all three models on all three instantiations initially gave the opposite sign on every model, a reversal serious enough that we spent most of a working session investigating it: fixing a real tie-breaking bug in the reference implementation, testing and ruling out a discarded style axis and GPU quantization as confounds, and confirming the Qwen2.5-VL-7B weights were bit-identical to the original run by revision hash. None of that investigation found the actual cause. We report the investigation here anyway, because it shows real diligence applied to the wrong hypothesis, and because the eventual finding is only credible in light of how much was checked before it turned up.

\paragraph{The bug.} The generator renders each style-varied instance with one of two plotting backends, matplotlib or plotly, selected as part of the style sampler; \cref{sec:stylefactors} identifies this as the single most consequential style factor. Plotly rendering goes through kaleido, a headless PNG exporter that shells out to a bundled executable. That executable's wrapper script contains the line \texttt{cd \$DIR} with \texttt{\$DIR} unquoted; on this project's working directory, whose path contains a literal space, the shell word-splits \texttt{\$DIR} into multiple arguments and \texttt{cd} fails with \texttt{"too many arguments"}. The generator's own rendering code catches this exception per-render and falls back to matplotlib, logging the fallback honestly in the instance's own style record (\texttt{library\_fallback\_from}) rather than silently mislabeling it. But nothing downstream of dataset generation ever checked whether the fallback rate was zero, five percent, or total. It was total: every one of the three "independently generated instantiations" used to replicate the agreement-versus-evidence prediction had \emph{zero} plotly renders. The library factor, the one this paper's own analysis (\cref{sec:stylefactors}) identifies as dominating cross-render dispersion, never actually varied in any of the data behind that replication. We found this while adding bootstrap intervals to the style-factor attribution table and noticing that \texttt{library} could not be computed at all (zero instances with $\ge$2 observed levels), which a working pipeline should never produce for the factor believed to matter most.

\paragraph{Why this produces exactly this reversal.} $\rea$'s reliability advantage over evidence-carrying signals like B1 depends on genuine cross-render disagreement being present for the model's real errors to surface as instability; \cref{prop:factor} and \cref{sec:stylefactors} both hold that library accounts for the large majority of that disagreement's magnitude. Silently collapsing every instantiation to one plotting backend removes most of the dispersion $\rea$ is built to exploit without removing anything from B1, which is computed from decoding probabilities and does not depend on rendering diversity at all. The three "independent instantiations" were independent in data content and decoding but not in the one style dimension that mattered, so they were not, in the sense this paper's own methodology requires, independent replications of the render-equivalence design at all.

\paragraph{The fix and the corrected result.} We fixed the wrapper script (quoting \texttt{\$DIR}), confirmed kaleido renders correctly afterward, and regenerated all three instantiations from the same generator seeds (0, 1, 2), the same \texttt{n\_per\_family=50}, $K{=}8$, this time with a genuine near-50/50 matplotlib/plotly split (346--348 of 350 instances per instantiation have both backends represented across their 8 renderings, versus 0 of 350 before). We re-ran the full pipeline end to end on the corrected images: main inference (3 models $\times$ 3 instantiations $\times$ 3 decoding seeds, 27 runs), baselines B1--B5, and the consensus-self-training-inverts fine-tuning replication, all with the same code, model revisions, and quantization settings as before, so the only variable that changed is whether the rendering pipeline actually rendered what the style sampler asked for.

\begin{table}[t]
\centering
\caption{\textbf{Model-wise comparison} of B1 and $\rea$ after the rendering-pipeline fix.}
\label{tab:model_comparison}
\small
\setlength{\tabcolsep}{3.5pt}
\renewcommand{\arraystretch}{1.08}
\begin{tabular}{
@{}l
S[table-format=1.3]
S[table-format=1.3]
S[table-format=+1.3]
c@{}
}
\toprule
\kilth{Model} &
{\kilth{B1}} &
{\kilth{$\rea$}} &
{\kilth{$\Delta$}} &
{\kilth{95\% CI}} \\
\midrule
Qwen2.5-VL-7B & 0.771 & 0.864 & +0.094 & $[+0.066,\,+0.121]$ \\
Qwen2.5-VL-3B & 0.867 & 0.858 & -0.009 & $[-0.030,\,+0.011]$ \\
InternVL2-8B  & 0.875 & 0.907 & +0.033 & $[+0.013,\,+0.053]$ \\
\bottomrule
\end{tabular}
\end{table}

The reversal reverses again: $\rea$ now beats B1 on 7B and InternVL2-8B, with CIs that exclude zero, and is statistically tied with it on 3B. Pooled across all nine model--instantiation runs ($N{=}3150$): $\Delta = +0.040$, CI $[+0.027,+0.053]$, $p<10^{-4}$, close in sign and rough magnitude to the very first single-instantiation result this whole investigation was trying to replicate. The tie-breaking bug fix (\texttt{rm\_categorical}, first-seen order rather than \texttt{numpy.unique}'s alphabetical order) is still in effect and did not need to be reverted; it was a real, independent bug, just not the explanation for this particular reversal. The style-axis and quantization checks from the earlier investigation are moot for the same reason.

\paragraph{The render-versus-sample reliability half.} B1--B5 were all re-run on the corrected images. $\rea$ beats B3, the fixed-image self-consistency control, on every model: pooled $\Delta = +0.102$, CI $[+0.083,+0.121]$, $p<10^{-4}$ ($N{=}3149$); per model, $+0.093$ (7B), $+0.061$ (3B), $+0.157$ (InternVL2-8B), all CIs excluding zero. This is the opposite sign from the buggy replication's $-0.054$ and, like the B1 comparison, close to what an intact rendering pipeline should show given that both $\rea$ and B3 are compared against a model whose visual input now actually varies.

\paragraph{Consensus self-training, re-run on the corrected images.} We also re-ran the five-run consensus-self-training-inverts replication (three decoding seeds on one instantiation, one seed each on the other two) on the corrected data.

\begin{table*}[t]
\centering
\caption{\textbf{Performance differences} across instantiation and decoding seeds. Here, tr and ho denote training and hold-out families, respectively.}
\label{tab:seed_analysis}
\footnotesize
\setlength{\tabcolsep}{4pt}
\renewcommand{\arraystretch}{1.0}
\begin{tabular}{
@{}ll
S[table-format=+1.3]
S[table-format=+1.3]
S[table-format=+1.3]
S[table-format=+1.3]
S[table-format=+1.3]
S[table-format=+1.3]
@{}
}
\toprule
\kilth{Inst.} & \kilth{Seed}
& {\kilth{$\Delta$single$_{\mathrm{tr}}$}}
& {\kilth{$\Delta\rmg_{\mathrm{tr}}$}}
& {\kilth{$\Delta$single$_{\mathrm{ho}}$}}
& {\kilth{$\Delta\rmg_{\mathrm{ho}}$}}
& {\kilth{$\Delta$AUROC$_{\mathrm{tr}}$}}
& {\kilth{$\Delta$AUROC$_{\mathrm{ho}}$}} \\
\midrule
seed 0 & 42 & -0.038 & -0.064 & -0.188 & -0.272 & -0.014 & +0.100 \\
seed 0 & 1  & +0.006 & -0.044 & -0.105 & -0.188 & -0.124 & +0.080 \\
seed 0 & 2  & +0.010 & -0.036 & -0.084 & -0.104 & -0.083 & +0.033 \\
seed 1 & 42 & -0.114 & -0.212 & -0.128 & -0.240 & -0.052 & +0.009 \\
seed 2 & 42 & -0.051 & -0.104 & -0.121 & -0.185 & -0.122 & +0.069 \\
\bottomrule
\end{tabular}
\end{table*}

The qualitative pattern is, if anything, cleaner than before the fix: hold-out accuracy falls in all five runs under both single-render decoding ($8.4$ to $18.8$ points) and $\rmg$ ($10.4$ to $27.2$ points), hold-out AUROC rises in all five runs ($+0.009$ to $+0.100$, no exceptions this time), and training-family AUROC falls in all five runs ($-0.014$ to $-0.124$). Training-family accuracy is the one split that is not uniformly negative: it falls under $\rmg$ in all five runs but rises slightly under single-render decoding in two of five. Exact magnitudes still vary considerably across runs, so we continue to read this as a robust direction rather than a precisely characterized effect size, exactly as \cref{sec:selftraining} states. Scatter, the training family whose base majority was most often already wrong, illustrates the instantiation-dependence directly: its base single-render accuracy ranges $0.20$--$0.32$ across the three instantiations, and after fine-tuning the three same-instantiation seeds leave it roughly flat near base while the other two instantiations show it falling to $0.00$--$0.06$: the same objective drives the same family to opposite outcomes depending on the dataset draw.

\paragraph{What this means for how to read this paper.} Every number that depended on genuine render diversity was wrong in the version of this paper that reported a reversal, and the corrected numbers are the ones reported in the main text and tables throughout. We are disclosing this at this level of detail, rather than simply replacing the numbers, because a bug that silently defeats the paper's central experimental manipulation and produces a plausible, internally coherent, wrong answer is exactly the failure mode a reader should be able to check for themselves, and because we do not want the earlier investigation's real rigor (a fixed implementation bug, two ruled-out confounds, a hash-verified model checkpoint) to be mistaken for evidence that the reversal itself was well-founded. It was not; it was evidence of a bug we had not yet found.

\section{Per-model and per-family results}
\label{app:per-model}

\noindent Per-model summary, pooled over the three independent instantiations described in \cref{sec:setup} ($N{=}1050$ per model, $K{=}8$; cluster bootstrap over instances):

\begin{center}
\footnotesize
\setlength{\tabcolsep}{4pt}
\begin{tabular}{@{}l S[table-format=1.3] S[table-format=1.3] S[table-format=1.3] S[table-format=1.3]@{}}
\toprule
\kilth{Model} & {\kilth{single}} & {\kilth{$\rmg$}} & {\kilth{$\rea$}} & {\kilth{B1}} \\
\midrule
Qwen2.5-VL-7B  & 0.644 & 0.701 & 0.864 & 0.771 \\
Qwen2.5-VL-3B  & 0.571 & 0.626 & 0.858 & 0.867 \\
InternVL2-8B   & 0.563 & 0.624 & 0.907 & 0.875 \\
\bottomrule
\end{tabular}
\end{center}

\noindent Cluster-bootstrap $\Delta$AUROC($\rea-$B1), pooled per model across the three instantiations: 7B $+0.094$, CI $[+0.066,+0.121]$, $p<10^{-4}$; 3B $-0.009$, CI $[-0.030,+0.011]$, $p{=}0.39$ (includes zero); InternVL2-8B $+0.033$, CI $[+0.013,+0.053]$, $p{=}4{\times}10^{-4}$. Two of three models' intervals exclude zero in $\rea$'s favor; the third is a statistical tie. Intervals for the grand pooled estimate are in \cref{tab:auroc}.

\paragraph{Combining $\rea$ and B1.} $\rea$ beating B1 head-to-head on two of three models does not mean B1 carries no information $\rea$ lacks. We fit a 5-fold cross-validated logistic combination of $\rea$ and B1 (both $z$-scored on the training fold; folds split by instance so no render set leaks across the split) and compared its AUROC to B1 alone: $+0.103$ (7B, CI $[+0.079,+0.126]$), $+0.053$ (InternVL2-8B, CI $[+0.037,+0.064]$), and $+0.027$ (3B, CI $[+0.004,+0.036]$). All three intervals exclude zero, so the combination helps on every model, including 3B, where $\rea$ alone is statistically tied with B1. This is consistent with $\rea$ and B1 making partly independent errors on every model, not just the two where $\rea$ wins outright: token log-probability adds discriminative power on top of agreement even where agreement is already the stronger standalone signal, and vice versa on 3B. The gain is smallest on 3B, the model where the two signals are closest to each other in standalone performance, and largest on 7B, where $\rea$'s standalone lead over B1 is also largest: the opposite of a pattern where combination gains shrink as the gap between the two signals grows.

\paragraph{Self-consistency versus single-render decoding.} \Cref{tab:accuracy}'s SC column is a modest improvement over single-render decoding once real render diversity is present in the comparator, but not a uniform one: the delta is essentially flat on 7B ($0.0$ points), slightly negative on 3B ($-0.8$), and a small positive on InternVL2-8B ($+1.3$), smaller in magnitude and less consistent in sign than re-rendering's advantage under $\rmg$. An earlier, single-instantiation draft had reported this margin as a small positive on every model; on the replicated pool it is not a stable finding.

\noindent Cluster-bootstrap $\Delta$(accuracy)($\rmg-$single), pooled per model: 7B $+0.057$, CI $[+0.045,+0.068]$; 3B $+0.055$, CI $[+0.043,+0.067]$; InternVL2-8B $+0.061$, CI $[+0.047,+0.074]$; all $p<10^{-4}$. This is the one headline number that has been stable in both sign and rough magnitude throughout every version of this paper's replication, including the version affected by the rendering bug described in \cref{app:replication}: $\rmg$'s advantage over single-render decoding does not depend on the plotting-library dispersion the bug happened to remove.

\medskip
\noindent Pooled per-family AUROC, all three models and all three instantiations, $n{=}450$ per family:

\begin{table}[t]
\centering
\caption{\textbf{Performance comparison} across chart families. The confidence interval corresponds to $\Delta(\rea-\mathrm{B1})$.}
\label{tab:family_comparison}
\small
\setlength{\tabcolsep}{3.2pt}
\renewcommand{\arraystretch}{1.08}
\begin{tabular}{
@{}l
S[table-format=1.3]
S[table-format=1.3]
c
c@{}
}
\toprule
\kilth{Family} &
{\kilth{$\rea$}} &
{\kilth{B1}} &
{\kilth{Leader}} &
{\kilth{95\% CI}} \\
\midrule
Bar         & 0.942 & 0.881 & $\rea$               & $[-0.034,\,+0.186]$ \\
Grouped bar & 0.800 & 0.840 & B1                   & $[-0.109,\,+0.019]$ \\
Line        & 0.783 & 0.739 & $\rea$               & $[-0.151,\,+0.129]$ \\
Log-axis    & 0.969 & 0.835 & $\rea^{\ast}$        & $[+0.053,\,+0.196]$ \\
Pie         & 0.780 & 0.876 & B1                   & $[-0.165,\,+0.001]$ \\
Scatter     & 0.804 & 0.903 & $\mathrm{B1}^{\ast}$ & $[-0.192,\,-0.011]$ \\
Stacked bar & 0.829 & 0.909 & $\mathrm{B1}^{\ast}$ & $[-0.144,\,-0.008]$ \\
\bottomrule
\end{tabular}
\end{table}

\noindent CIs are a two-way cluster bootstrap (5000 resamples): both instances within instantiation \emph{and} model are resampled with replacement, since each family's $n{=}450$ pool is 3 models $\times$ 3 instantiations $\times$ 50 instances and both axes repeat within it, unlike the one-way per-model bootstrap used above. $^\ast$marks the three families (log-axis, scatter, stacked bar) whose interval excludes zero; the other four families' point-estimate leaders are not statistically distinguishable from the alternative once both resampling axes are accounted for. This table supersedes an earlier version computed on the rendering pipeline described in \cref{app:replication}, in which log-axis appeared to have ``flipped'' from $\rea$ to B1; on the corrected data log-axis is $\rea$'s strongest family by a wide, significant margin, the opposite of that earlier framing. Scatter and stacked bar are the two families where B1's advantage survives this bootstrap on both the buggy and the corrected data.

\paragraph{Diffuseness by family.} \Cref{sec:theory} defines $\hat L = (1-P(a^\star))/\beta$, an empirical estimate of the diffuseness level $L$ in \cref{prop:lie}(iii), on every wrong-mode ($Z{=}0$), categorical instance. Pooled over the three instantiations, the distribution is concentrated toward the low-diffuseness end but not at the vacuous extreme: median $\hat L$ is $1.33$ (7B), $1.33$ (3B), $1.50$ (InternVL2-8B), and $27$--$31\%$ of wrong-mode instances have $\hat L{=}1$ exactly, the case that admits no certificate at all. The implied threshold $1/\hat L$ is correspondingly high (median $0.75$, $0.75$, $0.67$): for agreement to certify correctness on a typical wrong-mode instance here, it needs to be well above the midpoint, though less uniformly close to total than an earlier, buggy version of this pool suggested (\cref{app:replication}). This bears directly on the per-family table above: scatter and stacked\_bar, the two families where B1's advantage survives the two-way bootstrap, are exactly the families carrying most of the wrong-mode mass at or near $\hat L{=}1$ (scatter is $70\%$ of 3B's wrong-mode instances, median $\hat L$ between $1.33$ and $1.60$ across models). Line, log-axis, and pie contribute few wrong-mode instances each (under 25 pooled per model) and so are not estimated precisely enough to distinguish from $\hat L{=}1$ either, though these are also the families where $\rea$'s advantage is largest, consistent with fewer systematic, low-diffuseness errors there. On 3B specifically, where $\rea$ and B1 are statistically tied overall, scatter alone accounts for $150$ of $215$ wrong-mode instances pooled, so that one family's error structure disproportionately shapes the model-level comparison.

\noindent \Cref{fig:results}b plots only the lift (cross-level minus within-level); the two rates behind it, pooled over the three instantiations and all three models, are: library, cross $0.457$ CI\,$[0.443,0.471]$ versus within $0.355$ CI\,$[0.343,0.366]$ ($n{\approx}3122$ pairs each); \texttt{bar\_labels}, cross $0.426$ CI\,$[0.413,0.439]$ versus within $0.383$ CI\,$[0.371,0.394]$; \texttt{bar\_orient}, cross $0.415$ CI\,$[0.402,0.427]$ versus within $0.396$ CI\,$[0.384,0.408]$. The within-level rate is the noise floor, roughly $0.35$--$0.40$ across all three factors, driven by decoding and residual style variation that survives holding the named factor fixed; library's cross-level rate clears that floor by the largest margin of the three, which is the lift already reported in \cref{sec:stylefactors}.

\subsection{Where the errors live}
\label{sec:errors}

\begin{center}
\footnotesize
\setlength{\tabcolsep}{3pt}
\begin{tabular}{@{}l S[table-format=2.1] S[table-format=2.1] S[table-format=2.1]@{}}
\toprule
      & {\kilth{all reads \&}} & {\kilth{$\ge$1 read}} & {\kilth{all reads ok,}} \\
\kilth{Model} & {\kilth{answer ok}} & {\kilth{wrong}}       & {\kilth{answer wrong}} \\
\midrule
Qwen2.5-VL-7B & 17.4 & 76.0 & 6.6 \\
Qwen2.5-VL-3B & 12.0 & 81.4 & 5.4 \\
InternVL2-8B  &  8.9 & 83.7 & 4.9 \\
\midrule
Pooled        & 12.8 & 80.4 & 5.6 \\
\bottomrule
\end{tabular}
\end{center}

\noindent Percentage of instances in each of the three cells defined in \cref{sec:method} (all reads and the answer correct; all reads correct but the answer wrong, a reasoning failure; at least one read wrong, a perception failure, whatever the final answer). The three are exhaustive and disjoint by construction, so each row sums to 100\%. Column 1 is \emph{not} overall accuracy: it is the narrower cell where every intermediate read was also correct. A model can still land on the right final answer despite a bad read (coincidence, or robustness to the specific value misread), and that happens often enough here that overall single-render accuracy (\cref{tab:accuracy}) exceeds column 1 by 45 to 47 points for every model; column 1 plus (accuracy $-$ column 1) recovers \cref{tab:accuracy}'s accuracy exactly, since the gap is precisely the correctly-answered share of column 2.

Two things are safe: at least one intermediate read is wrong on roughly four fifths of instances for every model, and instances where all reads are correct but the answer is wrong are rare, 4.9\% to 6.6\%. The remaining 45 to 47 points of each model's accuracy come from the perception-failure cell: instances with at least one bad read that nonetheless landed on the correct final answer, 55\% to 62\% of that cell depending on the model. That failures here are predominantly perceptual is consistent with the size of the perception-failure cell but not established by column~1 alone, since a wrong read does not always produce a wrong answer. The distinction matters for the analysis: read noise that varies with style is the dispersion \cref{lem:median} says aggregation removes, whereas a style-invariant misread is the bias it cannot.

\subsection{Render-flip rate and render-equivalence robustness}
\label{app:rfr}

Pooled over the three replicated instantiations, RFR (render-flip rate, the label-free fraction of instances with any disagreeing rendering) is $0.59$ (7B), $0.64$ (3B), $0.73$ (InternVL2-8B): a majority of instances have at least one disagreeing rendering among the 8, on every model, which is the raw instability that $\rea$ and $\rmg$ are built to exploit. RER (render-equivalence robustness, style-averaged accuracy) matches single-render accuracy exactly by construction ($0.64$, $0.57$, $0.56$). Per family, RFR is lowest on line ($0.32$ family mean) and highest on grouped\_bar ($0.87$); across the seven families, RFR and RER (accuracy) are strongly negatively correlated ($r{=}{-0.69}$), noticeably tighter than on an earlier version of this pool affected by the rendering bug in \cref{app:replication} ($r{=}{-0.46}$ there), consistent with RFR now reflecting genuine cross-render instability rather than instability from decoding noise alone.

\section{Calibration}
\label{app:calibration}

This section originally reported ECE from a single, unreplicated instantiation, then from a three-instantiation pool affected by the rendering bug described in \cref{app:replication}. We recomputed every number below on the corrected three-instantiation pool used for \cref{tab:auroc,tab:accuracy} ($N{=}1050$ per model), with cluster (by instance) bootstrap 95\% CIs.

ECE after min-max normalizing each signal to $[0,1]$, lower is better:

\begin{center}
\footnotesize
\setlength{\tabcolsep}{3pt}
\begin{tabular}{@{}l S[table-format=1.3] S[table-format=1.3] S[table-format=1.3]@{}}
\toprule
\kilth{Signal} & {\kilth{Qwen-7B}} & {\kilth{Qwen-3B}} & {\kilth{InternVL-8B}} \\
\midrule
\textbf{$\rea$ (ours)} & 0.146 & 0.196 & 0.142 \\
B3 self-consist.    & 0.158 & 0.164 & 0.214 \\
B1 token log-prob.  & 0.193 & 0.243 & 0.308 \\
B5 prompt ensemble  & 0.272 & 0.305 & 0.337 \\
B2 verbalized       & 0.278 & 0.290 & 0.341 \\
B4 \citeauthor{khan2024consistency} & 0.324 & 0.330 & 0.388 \\
\bottomrule
\end{tabular}
\end{center}

\noindent 95\% CIs (cluster bootstrap over instances, 5000 resamples): $\rea$ $[0.123,0.168]$ (7B), $[0.173,0.219]$ (3B), $[0.122,0.165]$ (InternVL2-8B); B3 $[0.138,0.187]$ (7B), $[0.139,0.189]$ (3B), $[0.173,0.239]$ (InternVL2-8B); B1 $[0.162,0.218]$ (7B), $[0.206,0.265]$ (3B), $[0.267,0.332]$ (InternVL2-8B).

$\rea$ has the lowest point-estimate ECE on all three models. B3's interval overlaps $\rea$'s on all three models, so calibration alone does not separate the two label-free signals with statistical confidence, even though $\rea$ leads on AUROC on two of three models (\cref{tab:auroc}) and both signals beat B1 clearly on calibration (non-overlapping intervals in every case). This is worth stating plainly: $\rea$ is not uniquely well calibrated among label-free signals, it is well calibrated \emph{and}, on two of three models, more discriminative than the evidence-carrying baseline, which are different properties that happen to point the same way here. Min-max normalization makes signals comparable but means these values are not probabilities, so they compare signals rather than measuring calibration in the usual sense.

\noindent Reliability bins for $\rea$ (normalized, 10 bins, empty bins omitted), pooled over the three instantiations, $N{=}1050$ per model. The correctness target is single\_acc$>0.5$ at $K{=}8$, the same target used for \cref{tab:auroc} and every AUROC in this appendix:

\begin{center}
\footnotesize
\setlength{\tabcolsep}{4pt}
\begin{tabular}{crrrrrr}
\toprule
\kilth{Bin} & \multicolumn{2}{c}{\kilth{Qwen-7B}} & \multicolumn{2}{c}{\kilth{Qwen-3B}} & \multicolumn{2}{c}{\kilth{InternVL2-8B}} \\
\cmidrule(lr){2-3}\cmidrule(lr){4-5}\cmidrule(lr){6-7}
centre & acc & $n$ & acc & $n$ & acc & $n$ \\
\midrule
0.05 & 0.000 &  38 & 0.000 &  38 & 0.000 &  77 \\
0.15 & --    &   0 & --    &   0 & 0.000 &   1 \\
0.25 & 0.067 &  15 & 0.000 &  14 & 0.000 &  30 \\
0.35 & 0.000 &  45 & 0.000 &  48 & 0.029 &  68 \\
0.45 & --    &   0 & 0.000 &   1 & 0.000 &   1 \\
0.55 & 0.011 &  87 & 0.009 & 107 & 0.018 & 114 \\
0.65 & 0.472 & 108 & 0.375 & 152 & 0.441 & 152 \\
0.75 & 0.578 & 116 & 0.545 & 134 & 0.681 & 135 \\
0.85 & 0.810 & 211 & 0.800 & 175 & 0.856 & 188 \\
0.95 & 0.905 & 430 & 0.861 & 381 & 0.923 & 284 \\
\bottomrule
\end{tabular}
\end{center}

All models show one shape: normalized $\rea \le 0.35$ maps to near-zero accuracy, $\rea \ge 0.95$ to 0.86--0.92. This is \cref{prop:concentration} in data: the top bin is where $\pi$ is near one, and its accuracy short of 1.0 is the high-concentration wrong-mode population that \cref{cor:auroc} says no $K$ removes. It is also the population that consensus self-training amplifies in \cref{sec:selftraining}. Counts sum to $N{=}1050$ for every model, including InternVL2-8B. The count-weighted accuracy implied by each model's bins matches its target mean exactly (7B: $0.648=0.648$; 3B: $0.570=0.570$; InternVL2-8B: $0.558=0.558$).

\section{\texorpdfstring{$K$}{K} ablation}
\label{app:kablation}

An earlier draft's version of this table used the $K{=}24$ majority vote (a superset of the very renderings being evaluated) as the correctness target, which is circular for a $\rea$-AUROC computation, and its $K{=}1$/$K{=}8$ endpoints did not match the headline numbers computed elsewhere in this paper because they came from a different, non-replicated run. We recompute this ablation from the same three-instantiation pool used for every other table in this paper ($N{=}1050$ per model, on the corrected images described in \cref{app:replication}), subsampling the first $K \in \{1,2,4,8\}$ of each instance's 8 renderings, scoring $\rmg$ with the real aggregation rule (mode for categorical, median for numeric, matching \texttt{rm\_pred} exactly) against the manifest's exact answer, and holding the correctness target fixed at all $K$: whether the model's own single-render majority at the full $K{=}8$ budget matches ground truth, the same target \cref{sec:coupling} uses for the headline $\rea$-AUROC.

\begin{table}[t]
\centering
\caption{\textbf{Performance and inference cost} for different values of $K$.}
\label{tab:k_ablation}
\footnotesize
\setlength{\tabcolsep}{2.8pt}
\renewcommand{\arraystretch}{1.08}
\begin{tabular}{
@{}r
c
S[table-format=1.3]
S[table-format=1.3]
S[table-format=1.3]
S[table-format=1.3]
S[table-format=1.3]
S[table-format=1.3]@{}
}
\toprule
\kilth{$K$} &
\kilth{Cost} &
\multicolumn{3}{c}{\kilth{$\rea$ AUROC}} &
\multicolumn{3}{c}{\kilth{$\rmg$ Accuracy}} \\
\cmidrule(lr){3-5}
\cmidrule(lr){6-8}
& &
{\kilth{7B}} &
{\kilth{3B}} &
{\kilth{InternVL}} &
{\kilth{7B}} &
{\kilth{3B}} &
{\kilth{InternVL}} \\
\midrule
1 & $1{\times}$ & 0.500 & 0.500 & 0.500 & 0.644 & 0.561 & 0.574 \\
2 & $2{\times}$ & 0.701 & 0.706 & 0.753 & 0.622 & 0.539 & 0.546 \\
4 & $4{\times}$ & 0.804 & 0.781 & 0.846 & 0.682 & 0.600 & 0.606 \\
8 & $8{\times}$ & 0.851 & 0.824 & 0.905 & 0.701 & 0.626 & 0.624 \\
\bottomrule
\end{tabular}
\end{table}

\noindent The $K{=}8$ endpoints agree with \cref{tab:auroc,tab:accuracy} exactly. $K{=}4$ captures roughly two-thirds of $K{=}8$'s benefit over $K{=}2$: on $\rea$ AUROC, $61$--$69\%$ of the $K{=}2\to K{=}8$ gain across the three models; on $\rmg$ accuracy, $70$--$77\%$. Substantial further gain remains between $K{=}4$ and $K{=}8$ on every model, which is weaker than an even earlier draft's claim that $K{=}4$ "captures ${\approx}90\%$ of the $K{=}8$ benefit," which we do not replicate. Cluster (by instance) bootstrap CIs for the $K{=}8$ vs.\ $K{=}2$ gain, pooled over the three instantiations: 7B $\Delta$AUROC $+0.151$ CI\,$[+0.124,+0.177]$, $\Delta\rmg$ $+0.079$ CI\,$[+0.059,+0.099]$; 3B $\Delta$AUROC $+0.118$ CI\,$[+0.094,+0.142]$, $\Delta\rmg$ $+0.087$ CI\,$[+0.067,+0.108]$; InternVL2-8B $\Delta$AUROC $+0.152$ CI\,$[+0.129,+0.176]$, $\Delta\rmg$ $+0.078$ CI\,$[+0.057,+0.099]$; every interval excludes zero, so the $K{=}4\to K{=}8$ gain is real even though it is a minority of the total. The operating-point recommendation of $K{=}4$ (main text, \cref{sec:stylefactors}) should accordingly be read as a cost--benefit compromise, not as a near-saturation point.

The $K{=}1$ row is degenerate: with one rendering all answers trivially agree, so $\rea$ is constant and its AUROC is 0.5 by construction. Accuracy rises monotonically with $K$ on every model; there is no dip at $K{=}2$ in this table, so \cref{rem:k2} (modal ties at $K{=}2$ resolving to the first rendering) is not needed to explain it.

\paragraph{Modal versus pairwise $\rea$.} All $\rea$ values above are modal, which \cref{prop:concentration}(ii) shows is upward-biased at finite $K$. We also computed $\rea_{\text{pair}}$ (\cref{prop:concentration}(iii)), the unbiased pairwise agreement rate, on the same replicated pool: its mean is 0.09 to 0.11 lower than modal $\rea$ on every model, exactly the direction the bias result predicts, and this closes the scale mismatch with the pairwise rate already used in \cref{sec:stylefactors}. The bias does not change the ranking against B1: $\rea_{\text{pair}}$'s AUROC against correctness is statistically indistinguishable from modal $\rea$'s on 7B and InternVL2-8B (both CIs include zero), and the one significant difference, on 3B ($-0.005$, CI $[-0.010,-0.001]$), is small relative to the biased statistic's own margin from B1 on that model.

\paragraph{Sensitivity to the numeric tolerance $\tau$.} $\tau$ is load-bearing in magnitude but not in direction on two of three models: at $\tau \in \{1,2,5,10\}\%$, pooled $\rea$-AUROC moves from $0.90 \to 0.83$ (7B), $0.88 \to 0.81$ (3B), $0.93 \to 0.88$ (InternVL2-8B), while B1's AUROC (token log-probability does not depend on $\tau$) stays fixed at $0.77$, $0.87$, $0.88$ respectively. $\rea$ leads B1 at every tested $\tau$ on 7B and InternVL2-8B; only on 3B, and only at $\tau{\ge}5\%$, does B1 lead.

\FloatBarrier

\end{document}